\documentclass[english,12pt]{scrartcl}
\usepackage[T1]{fontenc}
\usepackage[utf8]{inputenc}
\usepackage{textcomp}
\usepackage{geometry}
\geometry{verbose,tmargin=2.7cm,bmargin=2.7cm,lmargin=2.5cm}
\setcounter{tocdepth}{2}
\setlength{\parskip}{\smallskipamount}
\setlength{\parindent}{0pt}
\usepackage[active]{srcltx}
\usepackage{color}
\usepackage{babel}
\usepackage{booktabs}
\usepackage{amsmath}
\usepackage{amsthm}
\usepackage{amssymb}
\usepackage{bm}
\usepackage{textcomp, mathcomp}
\usepackage{array}
\usepackage{makecell}
\usepackage{multirow}
\usepackage{float}
\usepackage{comment}
\usepackage[numbers]{natbib}
\usepackage{diagbox}
\usepackage{lipsum}
\usepackage{scalefnt}
\usepackage[unicode=true,pdfusetitle,bookmarks=true,
bookmarksnumbered=true,bookmarksopen=true,
bookmarksopenlevel=1,breaklinks=true,pdfborder={0 0 1},backref=false,colorlinks=true]{hyperref}
\hypersetup{
 linkcolor=blue, citecolor=blue, urlcolor=blue, filecolor=blue,pdfpagelayout=OneColumn, pdfnewwindow=true,pdfstartview=XYZ, plainpages=false}

\makeatletter
\theoremstyle{plain}
\newtheorem{assumption}{\protect\assumptionname}
\theoremstyle{definition}
\newtheorem{defn}{\protect\definitionname}
\theoremstyle{definition}
\newtheorem{example}{\protect\examplename}
\theoremstyle{plain}
\newtheorem{lem}{\protect\lemmaname}
\theoremstyle{plain}
\newtheorem{prop}{\protect\propositionname}
\theoremstyle{remark}
\newtheorem{rem}{\protect\remarkname}
\theoremstyle{plain}

\usepackage{scrlayer}
\usepackage{lastpage}
\usepackage{amsmath}
\usepackage{amssymb}
\usepackage{graphicx}
\usepackage{xcolor}
\usepackage{fancybox} 
\usepackage{moreverb} 
\usepackage{listings} 
\usepackage{backref}
\usepackage{authblk}

\usepackage{ifpdf} 
\ifpdf 

 \IfFileExists{lmodern.sty}{\usepackage{lmodern}}{}

\fi 

\let\myTOC\tableofcontents
\renewcommand\tableofcontents{%
  \pdfbookmark[1]{\contentsname}{}
  \myTOC
}

\def\LyX{\texorpdfstring{%
  L\kern-.1667em\lower.25em\hbox{Y}\kern-.125emX\@}
  {LyX}}

\@addtoreset{footnote}{section}

\renewcommand*{\backref}[1]{}
\renewcommand*{\backrefalt}[4]{%
   \ifcase #1 
    \or 
      (Cited on page~#2)%
   \else
      (Cited on pages~#2)
    \fi} 

\usepackage{dsfont}
\usepackage{bbm}
\usepackage{mathtools,amsthm,amssymb,bbm}



\definecolor{blue}{HTML}{1F77B4}
\definecolor{orange}{HTML}{FF7F0E}
\definecolor{green}{HTML}{2CA02C}
\definecolor{red}{HTML}{D62728}
\definecolor{purple}{HTML}{9467BD}
\definecolor{brown}{HTML}{8C564B}
\definecolor{pink}{HTML}{E377C2}
\definecolor{grey}{HTML}{7F7F7F}
\definecolor{yellow}{HTML}{BCBD22}
\definecolor{cyan}{HTML}{17BECF}
\definecolor{turquoise}{HTML}{3FE0D0}

\usepackage[ruled,vlined]{algorithm2e}
\usepackage{xcolor}
\definecolor{algoColorKeyword}{named}{blue}
\definecolor{algoColorComment}{named}{olive}

\SetKwComment{Comment}{$\triangleright$\ }{}
\SetKwRepeat{Do}{do}{while}

\usepackage{enumitem}
\setlist{leftmargin=*, topsep=0.5em, parsep=0pt, itemsep=1em, labelindent=0pt, align=left}

\makeatother

\providecommand{\assumptionname}{Assumption}
\providecommand{\definitionname}{Definition}
\providecommand{\examplename}{Example}
\providecommand{\lemmaname}{Lemma}
\providecommand{\propositionname}{Proposition}
\providecommand{\remarkname}{Remark}
\providecommand{\theoremname}{Theorem}

\begin{document}
\title{Fast Gaussian process inference by exact Mat\'ern kernel decomposition}

\author[1]{Nicolas Langren\'e\thanks{Corresponding author, nicolaslangrene@uic.edu.cn}} 
\author[2]{Xavier Warin}
\author[2]{Pierre Gruet}

\affil[1]{\normalsize Guangdong Provincial/Zhuhai Key Laboratory of Interdisciplinary Research and Application for Data Science, Beijing Normal-Hong Kong Baptist University} 
\affil[2]{\normalsize EDF Lab, FiME (Laboratoire de Finance des March\'es de l'\'Energie)} 

\date{\today}

\maketitle
\begin{abstract}
To speed up Gaussian process inference, a number of fast kernel matrix-vector multiplication (MVM) approximation algorithms have been proposed over the years. In this paper, we establish an exact fast kernel MVM algorithm based on exact kernel decomposition into weighted empirical cumulative distribution functions, compatible with a class of kernels which includes multivariate Mat\'ern kernels with half-integer smoothness parameter. This algorithm uses a divide-and-conquer approach, during which sorting outputs are stored in a data structure. We also propose a new algorithm to take into account some linear fixed effects predictor function. Our numerical experiments confirm that our algorithm is very effective for low-dimensional Gaussian process inference problems with hundreds of thousands of data points. An implementation of our algorithm is available at \url{https://gitlab.com/warin/fastgaussiankernelregression.git}.\\

\textbf{Keywords}: exact kernel decomposition, fast matrix-vector multiplication, Mat\'ern kernel, Gaussian process inference, Gaussian process regression.
\end{abstract}

\section{Introduction}

Consider $N$ input data points $\mathbf{x}_{1},\mathbf{x}_{2},\ldots,\mathbf{x}_{N}\in\mathbb{R}^{d}$
and their corresponding response values $y_{1},y_{2},\ldots,y_{N}\in\mathbb{R}$.
The Gaussian process regression model (see for example \citep{rasmussen2006gaussian}) is usually defined as follows:
\begin{eqnarray}
y_{i} & \sim &   f(\mathbf{x}_{i})+\varepsilon_{i},\qquad i=1,2,\ldots,N,\label{eq:gp_y}\\
f(\mathbf{x}) & \sim & \mathcal{GP}(m(\mathbf{x}),\tilde{K}(\mathbf{x},\mathbf{x}')),\label{eq:gp_f}
\end{eqnarray}
where   $\mathcal{GP}$ stands for a Gaussian process distribution with
prior mean $m:\mathbb{R}^{d}\rightarrow\mathbb{R}$, covariance kernel $\tilde{K}:\mathbb{R}^{d}\times\mathbb{R}^{d}\rightarrow\mathbb{R}$,
and the noise terms $\varepsilon_{i}\sim\mathcal{N}(0,\sigma^{2})$
are independent and identically distributed with known variance $\sigma^{2}>0$.
In this article we suppose that the  mean
function is either affine, so that
\begin{align}
   m(\mathbf{x}) =  \tilde \beta + \mathbf{\beta}^{\top} \mathbf{x},
   \label{eq:BetaLin}
\end{align}
with $\tilde \beta \in \mathbb{R}$, $\beta \in \mathbb{R}^{d}$, or can be expanded onto a set of basis functions $h_1, \ldots, h_L$ so  that
\begin{align}
    m(\mathbf{x}) = \mathbf{\beta}^{\top} \mathbf{h}(\mathbf{x}),
     \label{eq:BetaFunc}
\end{align}
 with $\mathbf{h}(\mathbf{x})= \{ h_1(\mathbf{x}), \ldots, h_L(\mathbf{x}) \}$. The marginal posterior distribution of $f$ at an
evaluation point $\mathbf{z}\in\mathbb{R}^{d}$ is Gaussian with mean
and variance given by
\begin{align}
\mathbb{E}[f(\mathbf{z})|\mathbf{x},\mathbf{y}] & =\mathbf{k}_{\mathbf{z}}^{\top}(\mathbf{K}+\sigma^{2}\mathbf{I})^{-1}(\mathbf{y} -  \mathbf{m}(\mathbf{x})) + m(\mathbf{z}),\label{eq:gp_mean}\\
\mathbb{V}\hspace{-0.08em}\mathrm{ar}[f(\mathbf{z})|\mathbf{x},\mathbf{y}] & =\tilde{K}(\mathbf{z},\mathbf{z})-\mathbf{k}_{\mathbf{z}}^{\top}(\mathbf{K}+\sigma^{2}\mathbf{I})^{-1}\mathbf{k}_{z},\label{eq:gp_variance}
\end{align}
where $\mathbf{x}=\{\mathbf{x}_{1},\mathbf{x}_{2},\ldots,\mathbf{x}_{N}\}$,
and $\mathbf{y}=\{y_{1},y_{2},\ldots,y_{N}\}$ form the input dataset,
$\mathbf{m}(\mathbf{x})= \{ m(\mathbf{x}_{1}),m(\mathbf{x}_{2}),\ldots, m(\mathbf{x}_{N})\} \in\mathbb{R}^{N}$,
$\mathbf{k}_{\mathbf{z}}=[\tilde{K}(\mathbf{x}_{i},\mathbf{z})]_{1\leq i\leq N}\in\mathbb{R}^{N}$
is the covariance between the training dataset and the evaluation
point $\mathbf{z}$, $\mathbf{K}=[\tilde{K}(\mathbf{x}_{i},\mathbf{x}_{j})]_{1\leq i,j\leq N}\in\mathbb{R}^{N\times N}$
is the covariance matrix evaluated at the input dataset, and $\mathbf{I}\in\mathbb{R}^{N\times N}$
is the identity matrix. Once $m$ has been estimated, the simple translation $ (\mathbf{y}- \mathbf{m} (\mathbf{x}), f(\mathbf{z})-m(\mathbf{z})) \longrightarrow ( \mathbf{y} , f(\mathbf{z}))$ reduces the problem to solving \eqref{eq:gp_mean} with $m =0$, so that
\begin{align}
\mathbb{E}[f(\mathbf{z})|\mathbf{x},\mathbf{y}] & =\mathbf{k}_{\mathbf{z}}^{\top}(\mathbf{K}+\sigma^{2}\mathbf{I})^{-1}\mathbf{y}. \label{eq:gp_meanS}
\end{align}
We make this simplifying centering assumption for the rest of the introduction.

In practice, the covariance kernel $\tilde{K}=\tilde{K}_{\bm{\theta}}$ depends on
some hyperparameters $\bm{\theta}$, which are to be calibrated on
the dataset $\{\mathbf{x},\mathbf{y}\}$ of interest. To do so, the
most common approach is to maximize the log-marginal likelihood
\begin{equation}
\mathcal{L}(\bm{\theta})=\log p(\mathbf{y}\left|\mathbf{x},\bm{\theta}\right.)=-\frac{1}{2}\mathbf{y}^{\top}(\mathbf{K}+\sigma^{2}\mathbf{I})^{-1}\mathbf{y}-\frac{1}{2}\mathrm{\log}(\det(\mathbf{K}+\sigma^{2}\mathbf{I}))-\frac{N}{2}\log(2\pi)\,.\label{eq:gp_log_likelihood}
\end{equation}
This can be achieved numerically using for example an iterative conjugate
gradient ascent algorithm. The gradient of the log-marginal likelihood
$\mathcal{L}$ with respect to $\bm{\theta}$ is given by
\begin{equation}
\frac{\partial\mathcal{L}}{\partial\bm{\theta}}(\bm{\theta})=\frac{1}{2}\mathbf{y}^{\top}(\mathbf{K}+\sigma^{2}\mathbf{I})^{-1}\frac{\partial\mathbf{K}}{\partial\bm{\theta}}(\mathbf{K}+\sigma^{2}\mathbf{I})^{-1}\mathbf{y}-\frac{1}{2}\mathrm{tr\!}\left((\mathbf{K}+\sigma^{2}\mathbf{I})^{-1}\frac{\partial\mathbf{K}}{\partial\bm{\theta}}\right)\,.\label{eq:gp_log_likelihood_derivative}
\end{equation}
It is apparent that the implementation of equations \eqref{eq:gp_meanS}-\eqref{eq:gp_log_likelihood}-\eqref{eq:gp_log_likelihood_derivative}
is a computational challenge. Indeed, a direct computation of the
inverse matrix $(\mathbf{K}+\sigma^{2}\mathbf{I})^{-1}$ requires
$\mathcal{O}(N^{3})$ operations (using Gauss-Jordan elimination for
example) and an $\mathcal{O}(N^{2})$ memory size, which is prohibitive
for large-scale problems. This can be slightly improved to about $\mathcal{O}(N^{2.8074})$
using Strassen's algorithm (see \citet{petkovic2009matrix}), but
this does very little to alleviate the computational burden of \eqref{eq:gp_meanS}-\eqref{eq:gp_log_likelihood}-\eqref{eq:gp_log_likelihood_derivative}.

Due to the growing importance and popularity of Gaussian process inference
in various fields (geostatistics, machine learning, engineering and
others, see for example \citet{gramacy2020surrogates} or \citet{binois2022survey}),
a vast literature on how to speed up the computation of \eqref{eq:gp_meanS}-\eqref{eq:gp_log_likelihood}-\eqref{eq:gp_log_likelihood_derivative}
has been developed over the years. Surveys and comparisons of existing
methods include \citet{sun2012geostatistics}, \citet{chalupka2013framework},
\citet{bradley2016comparison} \citet{heaton2019case}, \citet{liu2019understanding},
\citet{liu2020gaussian}, \citet{martinsson2020randomized}, \citet{huang2021competition}
and \citet{abdulah2022second}.

One broad category of approaches that stands out in the above articles
proceeds by replacing the covariance kernel matrix $\mathbf{K}$ by a sparse
/ low rank / local approximation, see for example \citet{snelson2005sparse},
\citet{titsias2009variational}, \citet{gramacy2015local}, \citet{katzfuss2021vecchia}
and \citet{maddox2021conditioning}. A different approach is to utilize
increasing computational power via parallel computing and/or GPU acceleration,
see for example \citet{deisenroth2015distributed}, \citet{matthews2017gpflow},
\citet{gardner2018gpytorch}, \citet{nguyen2019exact}, \citet{zhang2019parallel},
\citet{meanti2020kernel} \citet{charlier2021kernel}, \citet{hu2022giga},
\citet{abdulah2023large} and \citet{noack2023exact}.

An important remark is that equation \eqref{eq:gp_meanS}
can be rewritten without matrix inverse by introducing the solution
of the corresponding linear system. More specifically
\begin{align}
\mathbb{E}[f(\mathbf{z})|\mathbf{x},\mathbf{y}] & =\mathbf{k}_{\mathbf{z}}^{\top}\bm{\alpha}\label{eq:gp_mean_2}
\end{align}
where the vector $\bm{\alpha}=\{\alpha_{1},\alpha_{2},\ldots,\alpha_{N}\}\in\mathbb{R}^{N}$ is the unique solution of the $N\times N$ symmetric linear systems
\begin{align}
(\mathbf{K}+\sigma^{2}\mathbf{I})\bm{\alpha} & =\mathbf{y}\label{eq:gp_mean_system}
\end{align}
From here, a common starting point to decrease the $\mathcal{O}(N^{3})$
computational cost is to estimate the solution of the linear system
\eqref{eq:gp_mean_system} numerically
by iterative methods such as conjugate gradient (CG), instead of computing
the exact solution of \eqref{eq:gp_mean_system}
in $\mathcal{O}(N^{3})$ operations. Let $t_{\epsilon}$ be the number
of iterations of the conjugate gradient routine to achieve a predefined
accuracy level $\varepsilon>0$. This conjugate gradient approach
reduces the computational cost of Gaussian process prediction to $\mathcal{O}(t_{\epsilon}N^{2})$
and can be accelerated with appropriate preconditioning, see \citet{stein2012difference},
\citet{chen2013preconditioning}, \citet{cutajar2016preconditioning},
\citet{rudi2017falkon}, \citet{gardner2018gpytorch}, \citet{wang2019exact},
\citet{maddox2021iterative}, \citet{wenger2022preconditioning} and
\citet{zhang2022conjugate}, such that $t_{\varepsilon}\ll N$
in practice.

At this stage, the core computational step in each conjugate-gradient
iteration is the multiplication of the kernel matrix $\mathbf{K}$
with a vector, for an $\mathcal{O}(N^{2})$ computational cost. Further
speed-up therefore requires to accelerate such matrix-vector product,
by taking advantage of the fact that the matrix $\mathbf{K}$ is defined
by a covariance function. This generally requires additional approximations
to be made. Examples of fast matrix-vector multiplication (MVM) in
this context include \citet{quinonero2007approximation}, \citet{gardner2018product},
\citet{wang2019exact}, \citet{hu2022giga}, \citet{ryan2022fast}
and \citet{chen2023parallel}.

In this article, we introduce a new fast and exact kernel matrix-vector multiplication
algorithm, based on a recently established decomposition of kernel
functions into weighted sums of empirical cumulative distribution
function \citep{langrene2021fast}. We show that this computational
technique makes it possible to compute exact, dense kernel MVM for
an $\mathcal{O}(N\times(\log N)^{\max(1,d-1)})$ computational cost
and $\mathcal{O}(N)$ memory cost where $N$ is the number of input
data points in $\mathbb{R}^{d}$, for a class of stationary kernels
which includes the popular Mat\'ern covariance kernels.
We further improve the implementation of the fast CDF algorithm from \cite{langrene2021fast} by introducing a data structure to precalculate all the sorts needed. This reduces the computational cost to $\mathcal{O}(N\times(\log N)^{d-1})$, and greatly reduces the multiplicative computational constant.
Compared
to the $\mathcal{O}(N^{2})$ computational and memory costs of direct
MVM, this kernel CDF decomposition approach is very competitive on
problems with large $N$ (hundreds of thousands and above) and small $d$ (say,
$d<5$), which covers many important application areas such as geostatistics.
The underlying computational speed-up relies on the fast computation
of multivariate empirical cumulative distribution functions by a divide-and-conquer
algorithm as described in \citet{bentley1980divide}, \cite{bouchard2012monte}, which, like a number
of fast GP techniques mentioned above, could also be further accelerated
using parallel processing.

We implemented our fast kernel matrix-vector multiplication algorithm,
and integrated it into an iterative solver based on conjugate gradients,
Lanczos tridiagonalizations, stochastic trace estimation and appropriate
preconditioning as detailed in Appendix~\ref{sec:likelihood_optimization} in order to compute fast Gaussian process predictions
\eqref{eq:gp_mean} and fast estimation of Gaussian process parameters
by maximum log-likelihood \eqref{eq:gp_log_likelihood}. Our numerical
tests, focusing on Mat\'ern covariance matrices, confirm the speed
and accuracy of the proposed algorithms. The rest of the paper is
organized as follows. Section~\ref{sec:kernel_mvm} describes our
proposed fast kernel matrix-vector multiplication algorithms.  Section~\ref{sec:numericalResults}
details our numerical results  assuming either  that $m=0$ or  that $m(\mathbf{x}) = \mathbf{\beta}^{\top} \mathbf{x}$ where we propose an improved algorithm for estimating $\mathbf{\beta}$, and Section \ref{sec:conclusionsec} concludes.


\section{Fast kernel matrix-vector multiplication\label{sec:kernel_mvm}}

Consider a matrix-vector multiplication (MVM) where the matrix is
a covariance matrix $\mathbf{K}:=\left[\tilde{K}(\mathbf{x}_{i},\mathbf{x}_{j})\right]_{1\leq i,j\leq N}\in\mathbb{R}^{N\times N}$
defined by a kernel $\tilde{K}:\mathbb{R}^{d}\times\mathbb{R}^{d}\rightarrow\mathbb{R}$ over a dataset $\mathbf{x}_{i}\in\mathbb{R}^{d}, i=1,\ldots,N$.
The product between the matrix $\mathbf{K}$ and the vector $\mathbf{y}:=\left[y_{i}\right]_{1\leq i\leq N}\in\mathbb{R}^{N}$
is given by equation \eqref{eq:Ky} below:

\begingroup\addtolength{\jot}{1em}

\begin{align}
 & \mathbf{K}:=\left[\tilde{K}(\mathbf{x}_{i},\mathbf{x}_{j})\right]_{1\leq i,j\leq N}:=\left[\begin{array}{ccccc}
\tilde{K}(\mathbf{x}_{1},\mathbf{x}_{1}) & \tilde{K}(\mathbf{x}_{1},\mathbf{x}_{2}) & \tilde{K}(\mathbf{x}_{1},\mathbf{x}_{3}) & \cdots & \tilde{K}(\mathbf{x}_{1},\mathbf{x}_{N})\\
\tilde{K}(\mathbf{x}_{2},\mathbf{x}_{1}) & \tilde{K}(\mathbf{x}_{2},\mathbf{x}_{2}) & \tilde{K}(\mathbf{x}_{2},\mathbf{x}_{3}) & \cdots & \tilde{K}(\mathbf{x}_{2},\mathbf{x}_{N})\\
\tilde{K}(\mathbf{x}_{3},\mathbf{x}_{1}) & \tilde{K}(\mathbf{x}_{3},\mathbf{x}_{2}) & \tilde{K}(\mathbf{x}_{3},\mathbf{x}_{3}) & \cdots & \tilde{K}(\mathbf{x}_{3},\mathbf{x}_{N})\\
\vdots & \vdots & \vdots & \ddots & \vdots\\
\tilde{K}(\mathbf{x}_{N},\mathbf{x}_{1}) & \tilde{K}(\mathbf{x}_{N},\mathbf{x}_{2}) & \tilde{K}(\mathbf{x}_{N},\mathbf{x}_{3}) & \cdots & \tilde{K}(\mathbf{x}_{N},\mathbf{x}_{N})
\end{array}\right]\label{eq:covariance_matrix}\\
 & \mathbf{y}:=\left[y_{i}\right]_{1\leq i\leq N}:=\left[\begin{array}{c}
y_{1}\\
y_{2}\\
y_{3}\\
\vdots\\
y_{N}
\end{array}\right]\hspace{20mm}\mathbf{K}\mathbf{y}=\left[\begin{array}{c}
\sum_{i=1}^{N}y_{i}\tilde{K}(\mathbf{x}_{1},\mathbf{x}_{i})\\
\sum_{i=1}^{N}y_{i}\tilde{K}(\mathbf{x}_{2},\mathbf{x}_{i})\\
\sum_{i=1}^{N}y_{i}\tilde{K}(\mathbf{x}_{3},\mathbf{x}_{i})\\
\vdots\\
\sum_{i=1}^{N}y_{i}\tilde{K}(\mathbf{x}_{N},\mathbf{x}_{i})
\end{array}\right]\label{eq:Ky}
\end{align}

\endgroup

Equation \eqref{eq:Ky} shows that computing the matrix-vector product
$\mathbf{K}\mathbf{y}$ amounts to computing the weighted kernel density
estimators (KDE, up to a multiplicative constant)
\begin{equation}
\sum_{i=1}^{N}y_{i}\tilde{K}(\mathbf{x}_{i},\mathbf{z})\,\,\mathrm{for\,\,\mathrm{all\,\,}}\mathbf{z}\in\left\{ \mathbf{x}_{1},\mathbf{x}_{2},\ldots,\mathbf{x}_{N}\right\} \label{eq:kde}
\end{equation}
A direct implementation of this matrix-vector product \eqref{eq:Ky}-\eqref{eq:kde}
requires $\mathcal{O}(N^{2})$ operations and $\mathcal{O}(N^{2})$ memory storage. 

Recently, \citet{langrene2021fast} introduced an exact kernel decomposition
approach into weighted sums of empirical cumulative distribution functions
(CDFs), which was used for fast kernel density estimation and fast
kernel regression. For comprehensiveness, we recall how this method
works on univariate data.
Then we explain how to extend the methodology to the multivariate case  and refer to \citet{langrene2021fast} and
its supplementary material for the detailed multivariate implementation.

We begin with two classical definitions:
\begin{itemize}
    \item A kernel $\tilde{K}$ is said to be \textit{shift-invariant} (a.k.a.
translation-invariant, radially-symmetric, or stationary) if for any $\mathbf{x}_{i}\in\mathbb{R}^{d}$
and $\mathbf{x}_{j}\in\mathbb{R}^{d}$, $\tilde{K}(\mathbf{x}_{i},\mathbf{x}_{j})$ only depends on $\mathbf{x}_{i}$ and $\mathbf{x}_{j}$ through the
difference $\mathbf{x}_{i}-\mathbf{x}_{j}$:
$$\tilde{K}(\mathbf{x}_{i},\mathbf{x}_{j})= K(\mathbf{x}_{i}-\mathbf{x}_{j})$$
where $K:\mathbb{R}^d\rightarrow\mathbb{R}$. Moreover, the shift-invariant
kernel $K:\mathbb{R}^{d}\rightarrow\mathbb{R}$ is said to be \textit{isotropic} \citep{genton2001kernels} if it only depends on $\mathbf{x}_{i}$ and $\mathbf{x}_{j}$ through
the Euclidean norm $\left\Vert \mathbf{x}_{i}-\mathbf{x}_{j}\right\Vert $
of the difference $\mathbf{x}_{i}-\mathbf{x}_{j}$:
\begin{equation}
    K(\mathbf{x}_{i}-\mathbf{x}_{j})= k(\left\Vert \mathbf{x}_{i}-\mathbf{x}_{j}\right\Vert).\label{eq:isotropic}
\end{equation}
where $k:\mathbb{R}\rightarrow\mathbb{R}$.
\item A kernel $\tilde{K}$ is positive definite if
for any $N\geq1$, $(\mathbf{x}_{1},\ldots,\mathbf{x}_{N})\in\mathbb{R}^{d\times N}$
and any $(y_{1},\ldots,y_{N})\in\mathbb{R}^{N}$,
\begin{equation}
\sum_{i=1}^{N}\sum_{j=1}^{N}y_{i}y_{j}\tilde{K}(\mathbf{x}_{i},\mathbf{x}_{j})\geq0.\label{eq:positive_definite}
\end{equation}
\end{itemize}
In the rest of the paper, we are going to solely focus on continuous,
positive definite, radially-symmetric  kernels. In this situation, the kernel function $k:\mathbb{R}\rightarrow\mathbb{R}$ in equation~\eqref{eq:isotropic} is often parametrized by an outputscale parameter $\varsigma>0$ and a lengthscale parameter $\ell>0$ as
\begin{equation}
k(u)=k_{\varsigma,\ell}(u):=\varsigma^{2}k(u/\ell),\label{eq:scaled_kernel}
\end{equation}
and the set of parameters is denoted as $\theta=(\varsigma,\ell)$.

\subsection{The univariate case\label{subsec:univariate-case}}

Let $K$ be a univariate kernel satisfying the following assumption.
\begin{assumption}
\label{assu:kernel_decomposition_univariate}The shift-invariant,
isotropic kernel $K(u)=k(\left|u\right|)$ admits the following decomposition:
\begin{equation}
k(u-v)=\sum_{p=1}^{P}\varphi_{1,p}(u)\varphi_{2,p}(v),\ \forall u\in\mathbb{R},v\in\mathbb{R}\label{eq:kernel_decomposition_assumption}
\end{equation}
for some functions $\varphi_{1,p}:\mathbb{R}\rightarrow\mathbb{R}$
and $\varphi_{2,p}:\mathbb{R}\rightarrow\mathbb{R}$.
\end{assumption}
Under Assumption~\ref{assu:kernel_decomposition_univariate}, the
matrix-vector product \eqref{eq:Ky}-\eqref{eq:kde} can be decomposed as 
\begin{align}
 & \sum_{i=1}^{N}y_{i}K(x_{i}-z)=\sum_{i=1}^{N}y_{i}k(\left|x_{i}-z\right|)\nonumber \\
 & =\sum_{i=1}^{N}y_{i}k(z-x_{i})\mathbbm{1}\{x_{i}\leq z\}+\sum_{i=1}^{N}y_{i}k(x_{i}-z)\mathbbm{1}\{x_{i}>z\}\nonumber \\
 & =\sum_{i=1}^{N}y_{i}\sum_{p=1}^{P}\varphi_{1,p}(z)\varphi_{2,p}(x_{i})\mathbbm{1}\{x_{i}\leq z\}+\sum_{i=1}^{N}y_{i}\sum_{p=1}^{P}\varphi_{1,p}(-z)\varphi_{2,p}(-x_{i})\mathbbm{1}\{x_{i}>z\}\nonumber \\
 & =\varphi_{1,p}(z)\sum_{p=1}^{P}\left(\sum_{i=1}^{N}y_{i}\varphi_{2,p}(x_{i})\mathbbm{1}\{x_{i}\leq z\}\right)+\varphi_{1,p}(-z)\sum_{p=1}^{P}\left(\sum_{i=1}^{N}y_{i}\varphi_{2,p}(-x_{i})\mathbbm{1}\{x_{i}>z\}\right)\label{eq:fast_exact_decomposition_1d}
\end{align}
In other words, equation \eqref{eq:kde} is equal to a weighted sum
of $P$ weighted empirical cumulative distribution functions (CDF)
$\sum_{i=1}^{N}y_{i}\varphi_{2,p}(x_{i})\mathbbm{1}\{x_{i}\leq z\}$
and $P$ weighted empirical survival functions $\sum_{i=1}^{N}y_{i}\varphi_{2,p}(-x_{i})\mathbbm{1}\{x_{i}>z\}$.
This decomposition \eqref{eq:fast_exact_decomposition_1d} can be efficiently
computed for all $z\in\left\{ x_{1},x_{2},\ldots,x_{N}\right\} $
by sorting the dataset in increasing order and computing the CDFs
in increasing order from $z=x_{1}$ to $z=x_{N}$. This is efficient
because computing the CDF for $z=x_{i+1}$ can be done in $\mathcal{O}(1)$
operations by reusing the previously computed CDF for $z=x_{i}$ and
simply adding the new term $y_{i+1}\varphi_{2,p}(x_{i+1})$ to it.
This technique is known as the \textit{fast sum updating} algorithm
\citep{langrene2019fast}. Its computational complexity is $\mathcal{O}(N\log N)$,
which comes from the sorting of the dataset, and its memory complexity
is $\mathcal{O}(N)$. When the data comes pre-sorted, the computational
complexity of the matrix-vector product is reduced to $\mathcal{O}(N)$.
In this case, the data structure consists only in storing the result of a sort of the data achieved before the optimization of the parameters.
This is a substantial improvement over the $\mathcal{O}(N^{2})$ naive
matrix-vector multiplication algorithm.
\begin{example}
The univariate Mat\'ern-1/2 kernel (a.k.a. Laplacian kernel, or double-exponential
kernel) $K_{1/2}(u)=k_{1/2}(\left|u\right|)=e^{-\left|u\right|}$,
$u\in\mathbb{R}$, satisfies Assumption~\ref{assu:kernel_decomposition_univariate}:
\begin{equation}
k_{1/2}(u-v)=e^{-(u-v)}=e^{-u}e^{v}=\sum_{p=1}^{P}\varphi_{1,p}(u)\varphi_{2,p}(v)\label{eq:k12_decomposition}
\end{equation}
with $P=1$ and, for all $u\in\mathbb{R}$ and $v\in\mathbb{R}$,
\begin{equation}
\begin{array}{lll}
\varphi_{1,1}(u)=e^{-u} & , & \varphi_{2,1}(v)=e^{v}\end{array}\ .\label{eq:k12_decomposition_phi}
\end{equation}
\end{example}
\begin{example}
The univariate Mat\'ern-3/2 kernel $K_{3/2}(u)=k_{3/2}(\left|u\right|)=$\\
$\left(1+\sqrt{3}\left|u\right|\right)e^{-\sqrt{3}\left|u\right|}$,
$u\in\mathbb{R}$, satisfies Assumption~\ref{assu:kernel_decomposition_univariate}:
\begin{align}
k_{3/2}(u-v) & =\left(1+\sqrt{3}(u-v)\right)e^{-\sqrt{3}(u-v)}\nonumber \\
 & =\left(1+\sqrt{3}u\right)e^{-\sqrt{3}u}e^{\sqrt{3}v}-\sqrt{3}e^{-\sqrt{3}u}ve^{\sqrt{3}v}=\sum_{p=1}^{P}\varphi_{1,p}(u)\varphi_{2,p}(v)\label{eq:k32_decomposition}
\end{align}
with $P=2$ and, for all $u\in\mathbb{R}$ and $v\in\mathbb{R}$,
\begin{equation}
\begin{array}{lll}
\varphi_{1,1}(u)=(1+\sqrt{3}u)e^{-\sqrt{3}u} & , & \varphi_{2,1}(v)=e^{\sqrt{3}v}\\
\varphi_{1,2}(u)=-\sqrt{3}e^{-\sqrt{3}u} & , & \varphi_{2,2}(v)=ve^{\sqrt{3}v}\ .
\end{array}\label{eq:k32_decomposition_phi}
\end{equation}
\end{example}
\begin{example}
The univariate Mat\'ern-5/2 kernel $K_{5/2}(u)=k_{5/2}(\left|u\right|)=$\\
$\left(1+\sqrt{5}\left|u\right|+\frac{5}{3}\left|u\right|^{2}\right)e^{-\sqrt{5}\left|u\right|}$,
$u\in\mathbb{R}$, satisfies Assumption~\ref{assu:kernel_decomposition_univariate}:
\begin{align}
 & k_{5/2}(u-v)=\left(1+\sqrt{5}(u-v)+\frac{5}{3}(u-v)^{2}\right)e^{-\sqrt{5}(u-v)}\nonumber \\
 & =\left(1+\sqrt{5}u+\frac{5}{3}u^{2}\right)e^{-\sqrt{5}u}e^{\sqrt{5}v}-\left(\sqrt{5}+\frac{10}{3}u\right)e^{-\sqrt{5}u}ve^{\sqrt{5}v}+\frac{5}{3}e^{-\sqrt{5}u}v^{2}ve^{\sqrt{5}v}\nonumber \\
 & =\sum_{p=1}^{P}\varphi_{1,p}(u)\varphi_{2,p}(v)\label{eq:k52_decomposition}
\end{align}
with $P=3$ and, for all $u\in\mathbb{R}$ and $v\in\mathbb{R}$,
\begin{equation}
\begin{array}{lll}
\varphi_{1,1}(u)=\left(1+\sqrt{5}u+\frac{5}{3}u^{2}\right)e^{-\sqrt{5}u} & , & \varphi_{2,1}(v)=e^{\sqrt{5}v}\\
\varphi_{1,2}(u)=-\left(\sqrt{5}+\frac{10}{3}u\right)e^{-\sqrt{5}u} & , & \varphi_{2,2}(v)=ve^{\sqrt{5}v}\\
\varphi_{1,3}(u)=\frac{5}{3}e^{-\sqrt{5}u} & , & \varphi_{2,3}(v)=v^{2}e^{\sqrt{5}v}\ .
\end{array}\label{eq:k52_decomposition_phi}
\end{equation}
\end{example}
In a similar manner, all univariate Mat\'ern-$\nu$ kernels with
parameter $\nu=n+1/2$, $n\in\mathbb{N}$, satisfy Assumption~\ref{assu:kernel_decomposition_univariate}.
More generally, a list of compatible kernels is provided in \citep{langrene2021fast}.

\subsection{The multivariate case\label{subsec:multivariate-case}}

In this subsection, we explain how the univariate case, addressed above, can be extended to a multivariate setting. We detail the definition of multivariate kernels that we consider and then, after introducing some notation, we detail the computations of the CDF decompositions that we built for the first three analytical Mat\'ern kernels ($\nu\in\{0.5,1.5,2.5\}$).

\subsubsection{Multivariate kernel functions}

In order to extend the definition of kernel functions to the multivariate case, we could consider 
the classical approach \citep{guttorp2006studies}
of applying the univariate kernel to the
Euclidean norm $\left\Vert \mathbf{u}\right\Vert _{2}=\sqrt{\sum_{k=1}^{d}u_{k}^{2}}$
of the vector $\mathbf{u}$:
\begin{equation}
 K(\mathbf{u}):= k(\left\Vert \mathbf{u}\right\Vert _{2}), \label{eq:L2_kernel}
\end{equation}
where $\mathbf{u}=(u_{1},u_{2},\ldots,u_{d})\in\mathbb{R}^{d}$. 
Unfortunately, such a definition is incompatible with our proposed fast matrix-vector multiplication. Two
alternative definitions are the product of univariate kernels 
\begin{equation}
K(\mathbf{u}):=\prod_{k=1}^{d} k( \left |u_{k} \right |), \label{eq:product_kernel}
\end{equation}
as well as the substitution, in the classical multivariate kernel
definition \eqref{eq:L2_kernel}, of the Euclidean norm $\left\Vert \mathbf{u}\right\Vert _{2}$
by the $L1$ norm $\left\Vert \mathbf{u}\right\Vert _{1}=\sum_{k=1}^{d}\left|u_{k}\right|$:
\begin{equation}
K(\mathbf{u}):= k(\left\Vert \mathbf{u}\right\Vert _{1}).\label{eq:L1_kernel}
\end{equation}
Appendix~\ref{sec:positive_definite} shows that if the L2 kernel \eqref{eq:L2_kernel} is positive definite for all $d\geq 1$, then the product kernel \eqref{eq:product_kernel} is also positive definite for all $d\geq 1$. A sufficient condition for L1 kernels of the type \eqref{eq:L1_kernel} to be positive definite is also provided.

Subsubsection~\ref{subsubsec:cdf_decomposition} will describe how definitions \eqref{eq:product_kernel} and \eqref{eq:L1_kernel} are compatible with fast exact kernel decomposition under a suitable assumption about the kernel function $k$. Before doing so, we first need to introduce notations regarding multivariate empirical cumulative distribution functions.

\subsubsection{Multivariate CDF formulas\label{subsubsec:cdf_definition}}

Let $(\boldsymbol{x}_{1},{y}_{1}),(\boldsymbol{x}_{2},{y}_{2}),\ldots,(\boldsymbol{x}_{N},{y}_{N})$
be a sample of $N$ input (source) points $\boldsymbol{x}_{i}=(x_{1,i},{\allowbreak}x_{2,i},\allowbreak\ldots,{\allowbreak}x_{d,i})\in\mathbb{R}^{d}$
and output (response) points ${y}_{i}\in\mathbb{R}$. Consider
an evaluation (target) point $\boldsymbol{z}=(z_{1},z_{2},\ldots,z_{d})\in\mathbb{R}^{d}$.
We define a weighted multivariate empirical cumulative distribution
function (ECDF) as follows:
\begin{equation}
F(\boldsymbol{z})=F(\boldsymbol{z};\mathbf{x},\mathbf{y}):=\sum_{i=1}^{N}y_{i}\mathbbm{1}\{x_{1,i}\leq z_{1},\ldots,x_{d,i}\leq z_{d}\}\,.\label{eq:ECDF}
\end{equation}
The particular case $\mathbf{y}\equiv1/N$ corresponds to the classical
joint empirical distribution function $F(\boldsymbol{z})=\frac{1}{N}\sum_{i=1}^{N}\mathbbm{1}\{x_{1,i}\leq z_{1},\ldots,x_{d,i}\leq z_{d}\}$. More generally, define the following multivariate ECDF:
\begin{align}
F(\boldsymbol{z},\boldsymbol{\delta}) & =F(\mathbf{z},\boldsymbol{\delta};\mathbf{x},\mathbf{y}):=\sum_{i=1}^{N}y_{i}\mathbbm{1}\{x_{1,i}\leq_{\delta_{1}}z_{1},\ldots,x_{d,i}\leq_{\delta_{d}}z_{d}\}\label{eq:ECDFdelta}
\end{align}
where $\boldsymbol{\delta}=\left\{ \delta_{1},\delta_{2},\ldots,\delta_{d}\right\} \in\left\{ -1,1\right\} ^{d}$,
and where the generalized inequality operator $\leq_{c}$ corresponds
to $\leq$ (lower or equal) if $c\geq0$, and to $<$ (strictly lower)
if $c<0$. In particular $F(\boldsymbol{z})=F(\boldsymbol{z},\mathbf{1})$. We then introduce the dot products and element-wise products
\begin{eqnarray*}
\boldsymbol{\delta.z}=\sum_{k=1}^{d}\delta_{k}z_{k}\in\mathbb{R} & , & \boldsymbol{\delta z}=(\delta_{1}z_{1},\delta_{2}z_{2},\ldots,\delta_{d}z_{d})\in\mathbb{R}^{d}\\
\boldsymbol{\delta.x}_{i}=\sum_{k=1}^{d}\delta_{k}x_{k,i}\in\mathbb{R} & , & \boldsymbol{\delta x}_{i}=(\delta_{1}x_{1,i},\delta_{2}x_{2,i},\ldots,\delta_{d}x_{d,i})\in\mathbb{R}^{d}
\end{eqnarray*}

\subsubsection{Fast CDF decomposition\label{subsubsec:cdf_decomposition}}

Let $K$ be a multivariate kernel satisfying the following assumption.
\begin{assumption}
\label{assu:kernel_decomposition_multivariate}The shift-invariant multivariate kernel $K$ is defined either
as a product kernel $K(\mathbf{u})=\prod_{k=1}^{d} k( \left |u_{k} \right |)$ or as an L1 kernel $K(\mathbf{u})=k(\left\Vert \mathbf{u}\right\Vert _{1})$, where $k:\mathbb{R}\rightarrow\mathbb{R}$ satisfies equation~\eqref{eq:kernel_decomposition_assumption}.
\end{assumption}
\begin{prop}
\label{prop:L1_kernel_decomposition}Under Assumption~\ref{assu:kernel_decomposition_multivariate},
the kernel matrix-vector product \eqref{eq:Ky}-\eqref{eq:kde} can
be decomposed, in the L1 kernel case, as
\begin{align}
 & \sum_{i=1}^{N}y_{i}K(\boldsymbol{x}_{i}-\boldsymbol{z})=\sum_{i=1}^{N}y_{i}k\left(\left\Vert \boldsymbol{x}_{i}-\boldsymbol{z}\right\Vert _{1}\right)=\sum_{i=1}^{N}y_{i}k\!\left(\sum_{k=1}^{d}\left|x_{k,i}-z_{k}\right|\right)\nonumber \\
= & \sum_{\boldsymbol{\delta}\in\{-1,1\}^{d}}\sum_{p=1}^{P}\varphi_{1,p}(\boldsymbol{\delta.z})F(\boldsymbol{\delta}\boldsymbol{z},\boldsymbol{\delta};\boldsymbol{\delta}\mathbf{x},\boldsymbol{w}_p)\label{eq:L1_decomposition}
\end{align}
where the weights vector $\boldsymbol{w}_p=(w_{1,p},w_{2,p},\ldots,w_{d,p})\in\mathbb{R}^{d}$
is defined by
\begin{equation}
w_{i,p}=w_{i,p}(\boldsymbol{\delta},\boldsymbol{x}_{i},y_{i}):=y_{i}\varphi_{2,p}(\boldsymbol{\delta.x}_{i}).\label{eq:L1_weights}
\end{equation}
\end{prop}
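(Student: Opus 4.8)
The plan is to reduce the multivariate L1 computation to the univariate decomposition \eqref{eq:kernel_decomposition_assumption} by partitioning the data points according to the sign pattern of the coordinate differences $x_{k,i}-z_k$. The starting observation is that, for fixed $i$ and $\boldsymbol{z}$, if I set $\delta_k=+1$ when $x_{k,i}\leq z_k$ and $\delta_k=-1$ when $x_{k,i}>z_k$, then coordinate-wise $|x_{k,i}-z_k|=\delta_k(z_k-x_{k,i})$, so that
\begin{equation*}
\sum_{k=1}^{d}|x_{k,i}-z_k|=\sum_{k=1}^{d}\delta_k(z_k-x_{k,i})=\boldsymbol{\delta.z}-\boldsymbol{\delta.x}_i.
\end{equation*}
Hence $k\!\left(\sum_{k=1}^{d}|x_{k,i}-z_k|\right)=k(\boldsymbol{\delta.z}-\boldsymbol{\delta.x}_i)$ precisely when $\boldsymbol{\delta}$ is the sign pattern matching $\boldsymbol{x}_i$ relative to $\boldsymbol{z}$, which is exactly a one-dimensional argument of the form $u-v$ to which Assumption~\ref{assu:kernel_decomposition_multivariate} (through \eqref{eq:kernel_decomposition_assumption}) applies, with $u=\boldsymbol{\delta.z}$ and $v=\boldsymbol{\delta.x}_i$.

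Next I would turn this into a genuine sum over all $\boldsymbol{\delta}\in\{-1,1\}^{d}$ by inserting a partition of unity. For a candidate sign vector $\boldsymbol{\delta}$, the indicator that $\boldsymbol{x}_i$ lies in the corresponding orthant is $\prod_{k:\delta_k=+1}\mathbbm{1}\{x_{k,i}\leq z_k\}\prod_{k:\delta_k=-1}\mathbbm{1}\{x_{k,i}>z_k\}$. The crucial bookkeeping step is to verify that, with the convention that $\leq_c$ is $\leq$ for $c\geq0$ and $<$ for $c<0$, this product equals $\mathbbm{1}\{\delta_1 x_{1,i}\leq_{\delta_1}\delta_1 z_1,\ldots,\delta_d x_{d,i}\leq_{\delta_d}\delta_d z_d\}$: multiplying each coordinate inequality by $\delta_k$ and applying the strict/non-strict switch reproduces $x_{k,i}\leq z_k$ when $\delta_k=+1$ and $x_{k,i}>z_k$ when $\delta_k=-1$. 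The strict inequality imposed on the $\delta_k=-1$ coordinates is what assigns every boundary point $x_{k,i}=z_k$ to a unique orthant, so that these indicators sum to one over $\boldsymbol{\delta}$ for each $i$. This is the one delicate point of the argument, and I expect the clean handling of the ties $x_{k,i}=z_k$ to be the main obstacle; the remainder is elementary algebra.

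Finally, I would combine the two steps. Since exactly one orthant is active per point,
\begin{equation*}
\sum_{i=1}^{N}y_i\, k\!\left(\sum_{k=1}^{d}|x_{k,i}-z_k|\right)=\sum_{i=1}^{N}y_i\sum_{\boldsymbol{\delta}\in\{-1,1\}^{d}}\mathbbm{1}\{\delta_k x_{k,i}\leq_{\delta_k}\delta_k z_k\ \forall k\}\,k(\boldsymbol{\delta.z}-\boldsymbol{\delta.x}_i),
\end{equation*}
after which I substitute $k(\boldsymbol{\delta.z}-\boldsymbol{\delta.x}_i)=\sum_{p=1}^{P}\varphi_{1,p}(\boldsymbol{\delta.z})\varphi_{2,p}(\boldsymbol{\delta.x}_i)$ and exchange the order of summation. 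Pulling $\varphi_{1,p}(\boldsymbol{\delta.z})$ out of the sum over $i$ leaves $\sum_{i}y_i\varphi_{2,p}(\boldsymbol{\delta.x}_i)\mathbbm{1}\{\delta_k x_{k,i}\leq_{\delta_k}\delta_k z_k\ \forall k\}$, which is exactly the weighted multivariate ECDF \eqref{eq:ECDFdelta} evaluated at target $\boldsymbol{\delta z}$, with generalized inequalities $\boldsymbol{\delta}$, source points $\boldsymbol{\delta x}$, and weights $w_{i,p}=y_i\varphi_{2,p}(\boldsymbol{\delta.x}_i)$ as in \eqref{eq:L1_weights}. This produces $\sum_{\boldsymbol{\delta}\in\{-1,1\}^{d}}\sum_{p=1}^{P}\varphi_{1,p}(\boldsymbol{\delta.z})F(\boldsymbol{\delta z},\boldsymbol{\delta};\boldsymbol{\delta x},\boldsymbol{w}_p)$, which is exactly \eqref{eq:L1_decomposition}, completing the proof.
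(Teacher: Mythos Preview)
Your argument is correct and follows essentially the same route as the paper's proof: partition the data points by the sign vector $\boldsymbol{\delta}$ of $\boldsymbol{z}-\boldsymbol{x}_i$, rewrite each $\|\boldsymbol{x}_i-\boldsymbol{z}\|_1$ as $\boldsymbol{\delta.z}-\boldsymbol{\delta.x}_i$, apply the univariate decomposition~\eqref{eq:kernel_decomposition_assumption}, and regroup into the weighted ECDF $F$. Your explicit discussion of the tie convention $\leq_{\delta_k}$ ensuring the orthant indicators form a partition of unity is a nice clarification that the paper leaves implicit.
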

\begin{proof}
\begin{align*}
 & \sum_{i=1}^{N}y_{i}K(\boldsymbol{x}_{i}-\boldsymbol{z})=\sum_{i=1}^{N}y_{i}k\left(\left\Vert \boldsymbol{x}_{i}-\boldsymbol{z}\right\Vert _{1}\right)=\sum_{i=1}^{N}y_{i}k\!\left(\sum_{k=1}^{d}\left|x_{k,i}-z_{k}\right|\right)\\
= & \sum_{\boldsymbol{\delta}\in\{-1,1\}^{d}}\sum_{i=1}^{N}y_{i}k\!\left(\sum_{k=1}^{d}\left(\delta_{k}z_{k}-\delta_{k}x_{k,i}\right)\right)\mathbbm{1}\{\delta_{1}x_{1,i}\leq_{\delta_{1}}\delta_{1}z_{1},\ldots,\delta_{d}x_{d,i}\leq_{\delta_{d}}\delta_{d}z_{d}\}\\
= & \sum_{\boldsymbol{\delta}\in\{-1,1\}^{d}}\sum_{i=1}^{N}y_{i}\sum_{p=1}^{P}\varphi_{1,p}(\boldsymbol{\delta.z})\varphi_{2,p}(\boldsymbol{\delta.x}_{i})\mathbbm{1}\{\delta_{1}x_{1,i}\leq_{\delta_{1}}\delta_{1}z_{1},\ldots,\delta_{d}x_{d,i}\leq_{\delta_{d}}\delta_{d}z_{d}\}\\
= & \sum_{\boldsymbol{\delta}\in\{-1,1\}^{d}}\sum_{p=1}^{P}\varphi_{1,p}(\boldsymbol{\delta.z})\sum_{i=1}^{N}y_{i}\varphi_{2,p}(\boldsymbol{\delta.x}_{i})\mathbbm{1}\{\delta_{1}x_{1,i}\leq_{\delta_{1}}\delta_{1}z_{1},\ldots,\delta_{d}x_{d,i}\leq_{\delta_{d}}\delta_{d}z_{d}\}\\
= & \sum_{\boldsymbol{\delta}\in\{-1,1\}^{d}}\sum_{p=1}^{P}\varphi_{1,p}(\boldsymbol{\delta.z})F(\boldsymbol{\delta}\boldsymbol{z},\boldsymbol{\delta};\boldsymbol{\delta}\mathbf{x},\boldsymbol{w}_p)
\end{align*}
\end{proof}
\begin{prop}
\label{prop:product_kernel_decomposition}Under Assumption~\ref{assu:kernel_decomposition_multivariate},
the kernel matrix-vector product \eqref{eq:Ky}-\eqref{eq:kde} can
be decomposed, in the product kernel case, as
\begin{align}
 & \sum_{i=1}^{N}y_{i}K(\boldsymbol{x}_{i}-\boldsymbol{z})=\sum_{i=1}^{N}y_{i}\prod_{k=1}^{d}k(\left|x_{k,i}-z_{k}\right|)\nonumber \\
= & \sum_{\boldsymbol{\theta}\in\{1,2,\ldots,P\}^{d}}\sum_{\boldsymbol{\delta}\in\{-1,1\}^{d}}\left(\prod_{k=1}^{d}\varphi_{1,\theta_{k}}(\delta_{k}z_{k})\right)F(\boldsymbol{\delta}\boldsymbol{z},\boldsymbol{\delta};\boldsymbol{\delta}\mathbf{x},\boldsymbol{w})\label{eq:product_decomposition}
\end{align}
where the weights vector $\boldsymbol{w}=(w_{1},w_{2},\ldots,w_{d})\in\mathbb{R}^{d}$
is defined by
\begin{equation}
w_{i}=w_{i}(\boldsymbol{\delta},\boldsymbol{\theta},\boldsymbol{x}_{i},y_{i}):=y_{i}\prod_{k=1}^{d}\varphi_{2,\theta_{k}}(\delta_{k}x_{k,i}).\label{eq:product_weights}
\end{equation}
\end{prop}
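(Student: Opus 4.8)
The plan is to mirror the derivation used for the L1 kernel in Proposition~\ref{prop:L1_kernel_decomposition}, with the one essential modification that the univariate decomposition~\eqref{eq:kernel_decomposition_assumption} is now applied to each of the $d$ factors $k(|x_{k,i}-z_k|)$ separately rather than to a single aggregated argument. First I would insert a summation over all sign patterns $\boldsymbol{\delta}\in\{-1,1\}^d$, weighted by the indicator $\mathbbm{1}\{\delta_1 x_{1,i}\leq_{\delta_1}\delta_1 z_1,\ldots,\delta_d x_{d,i}\leq_{\delta_d}\delta_d z_d\}$. For each source/target pair $(\boldsymbol{x}_i,\boldsymbol{z})$ exactly one such pattern is active: $\delta_k=+1$ selects $x_{k,i}\leq z_k$ while $\delta_k=-1$ selects $x_{k,i}>z_k$, and the convention that $\leq_{-1}$ is strict guarantees that ties $x_{k,i}=z_k$ are assigned to a single pattern, so the insertion leaves the sum unchanged.

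On the support of this indicator each absolute value linearizes as $|x_{k,i}-z_k|=\delta_k z_k-\delta_k x_{k,i}$ with nonnegative argument, so $k(|x_{k,i}-z_k|)=k(\delta_k z_k-\delta_k x_{k,i})$. Applying~\eqref{eq:kernel_decomposition_assumption} to each factor with $u=\delta_k z_k$ and $v=\delta_k x_{k,i}$ then yields $k(\delta_k z_k-\delta_k x_{k,i})=\sum_{p=1}^P\varphi_{1,p}(\delta_k z_k)\varphi_{2,p}(\delta_k x_{k,i})$.

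The crucial step, and the one place where the argument genuinely departs from the L1 case, is the expansion of the product $\prod_{k=1}^d\bigl(\sum_{p=1}^P\varphi_{1,p}(\delta_k z_k)\varphi_{2,p}(\delta_k x_{k,i})\bigr)$. Distributing this product of $d$ sums produces a single sum over multi-indices $\boldsymbol{\theta}=(\theta_1,\ldots,\theta_d)\in\{1,\ldots,P\}^d$, each term being $\prod_{k=1}^d\varphi_{1,\theta_k}(\delta_k z_k)\varphi_{2,\theta_k}(\delta_k x_{k,i})$. These factors split cleanly into a target-only piece $\prod_{k=1}^d\varphi_{1,\theta_k}(\delta_k z_k)$ and a source-only piece $\prod_{k=1}^d\varphi_{2,\theta_k}(\delta_k x_{k,i})$, which is precisely the separability the ECDF structure requires.

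Finally I would interchange the order of summation to bring $\sum_{\boldsymbol{\theta}}\sum_{\boldsymbol{\delta}}$ outside, factor the target-dependent term out of the sum over $i$, and recognize that $\sum_{i=1}^N y_i\prod_{k=1}^d\varphi_{2,\theta_k}(\delta_k x_{k,i})\,\mathbbm{1}\{\delta_1 x_{1,i}\leq_{\delta_1}\delta_1 z_1,\ldots,\delta_d x_{d,i}\leq_{\delta_d}\delta_d z_d\}$ is exactly the weighted ECDF $F(\boldsymbol{\delta}\boldsymbol{z},\boldsymbol{\delta};\boldsymbol{\delta}\mathbf{x},\boldsymbol{w})$ of~\eqref{eq:ECDFdelta} with the weights $w_i=y_i\prod_{k=1}^d\varphi_{2,\theta_k}(\delta_k x_{k,i})$ prescribed in~\eqref{eq:product_weights}. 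I expect no serious obstacle beyond careful bookkeeping of the multi-index $\boldsymbol{\theta}$; the only point demanding genuine attention is confirming that the product expansion and the subsequent interchange of summations are valid term by term, which holds since all the sums involved are finite.
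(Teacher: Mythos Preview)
Your proposal is correct and follows essentially the same approach as the paper's proof: insert the sign-pattern sum with indicators, linearize each absolute value, apply the univariate decomposition factor by factor, expand the product of sums into a sum over multi-indices $\boldsymbol{\theta}$, separate target and source factors, and collapse the inner sum into the weighted ECDF $F$. Your write-up is in fact slightly more explicit than the paper's (for instance in justifying that exactly one sign pattern is active and that ties are handled by the $\leq_{-1}$ convention), but the logical structure is identical.
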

\begin{proof}
\begin{align*}
 & \sum_{i=1}^{N}y_{i}K(\boldsymbol{x}_{i}-\boldsymbol{z})=\sum_{i=1}^{N}y_{i}\prod_{k=1}^{d}k(\left|x_{k,i}-z_{k}\right|)\\
= & \sum_{\boldsymbol{\delta}\in\{-1,1\}^{d}}\sum_{i=1}^{N}y_{i}\prod_{k=1}^{d}k(\delta_{k}z_{k}-\delta_{k}x_{k,i})\mathbbm{1}\{\delta_{1}x_{1,i}\leq_{\delta_{1}}\delta_{1}z_{1},\ldots,\delta_{d}x_{d,i}\leq_{\delta_{d}}\delta_{d}z_{d}\}\\
= & \sum_{\boldsymbol{\delta}\in\{-1,1\}^{d}}\sum_{i=1}^{N}y_{i}\prod_{k=1}^{d}\left(\sum_{p=1}^{P}\varphi_{1,p}(\delta_{k}z_{k})\varphi_{2,p}(\delta_{k}x_{k,i})\right)\mathbbm{1}\{\delta_{1}x_{1,i}\leq_{\delta_{1}}\delta_{1}z_{1},\ldots,\delta_{d}x_{d,i}\leq_{\delta_{d}}\delta_{d}z_{d}\}\\
= & \sum_{\boldsymbol{\delta}\in\{-1,1\}^{d}}\sum_{i=1}^{N}y_{i}\sum_{\boldsymbol{\theta}\in\{1,2,\ldots,P\}^{d}}\left(\prod_{k=1}^{d}\varphi_{1,\theta_{k}}(\delta_{k}z_{k})\varphi_{2,\theta_{k}}(\delta_{k}x_{k,i})\right)\mathbbm{1}\{\delta_{1}x_{1,i}\leq_{\delta_{1}}\delta_{1}z_{1},\ldots,\delta_{d}x_{d,i}\leq_{\delta_{d}}\delta_{d}z_{d}\}\\
= & \sum_{\boldsymbol{\theta}\in\{1,2,\ldots,P\}^{d}}\sum_{\boldsymbol{\delta}\in\{-1,1\}^{d}}\left(\prod_{k=1}^{d}\varphi_{1,\theta_{k}}(\delta_{k}z_{k})\right)F(\boldsymbol{\delta}\boldsymbol{z},\boldsymbol{\delta};\boldsymbol{\delta}\mathbf{x},\boldsymbol{w})
\end{align*}
\end{proof}
Propositions~\ref{prop:L1_kernel_decomposition} and \ref{prop:product_kernel_decomposition} show that, under Assumption~\ref{assu:kernel_decomposition_multivariate}, the kernel matrix-vector product \eqref{eq:kde} can be written as a weighted sum of empirical cumulative distribution functions. This generalizes equation~\eqref{eq:fast_exact_decomposition_1d} to the multidimensional setting. Figure~\ref{fig:kernel_decomposition}
provides an intuitive description of the kernel CDF decomposition \eqref{eq:L1_decomposition} for bivariate
data.

\begin{figure}[H]
\begin{minipage}[t]{0.48\columnwidth}%
\includegraphics[width=0.38\paperwidth]{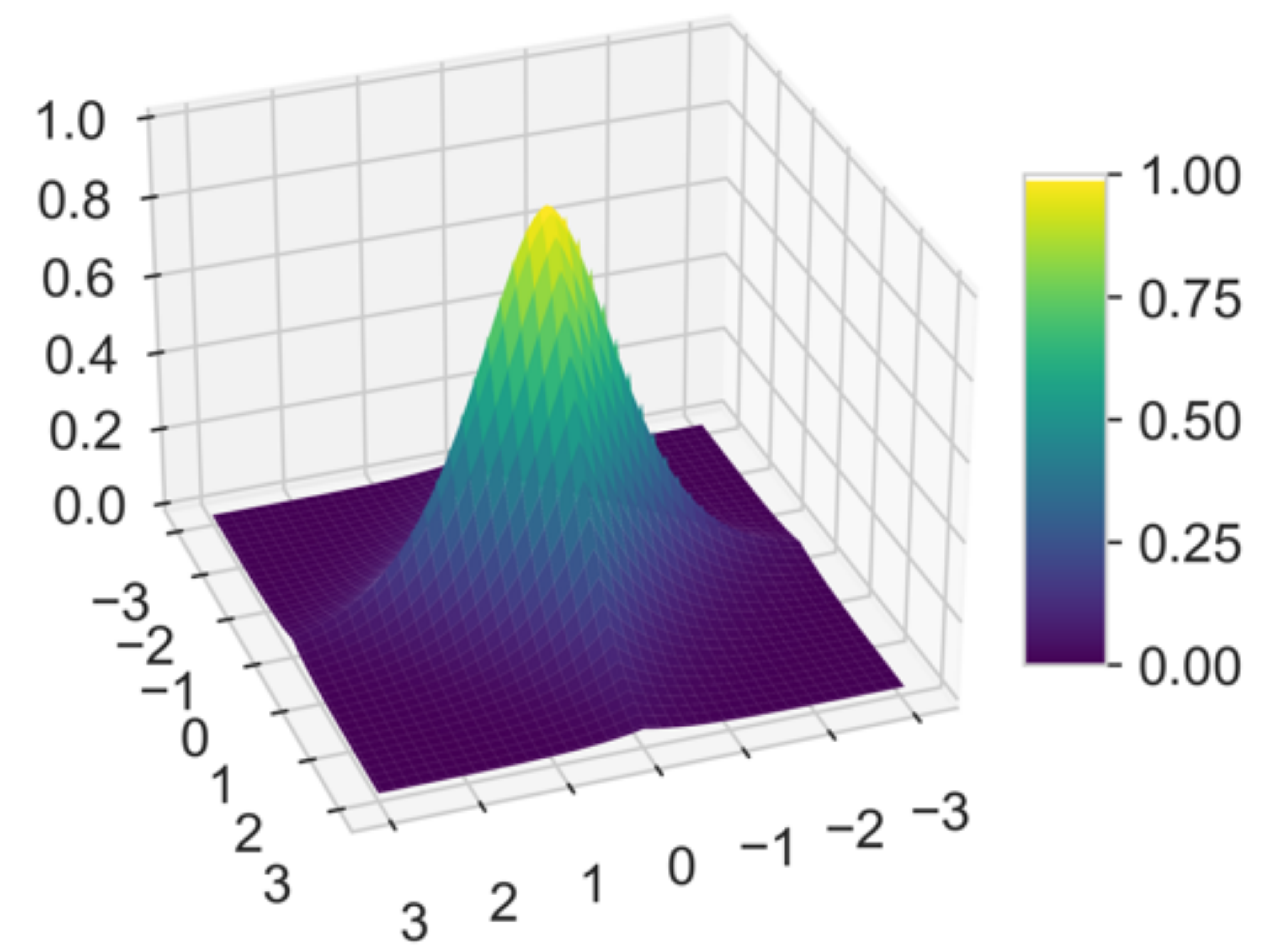}%
\end{minipage}\hfill{}%
\begin{minipage}[t]{0.48\columnwidth}%
\includegraphics[width=0.38\paperwidth]{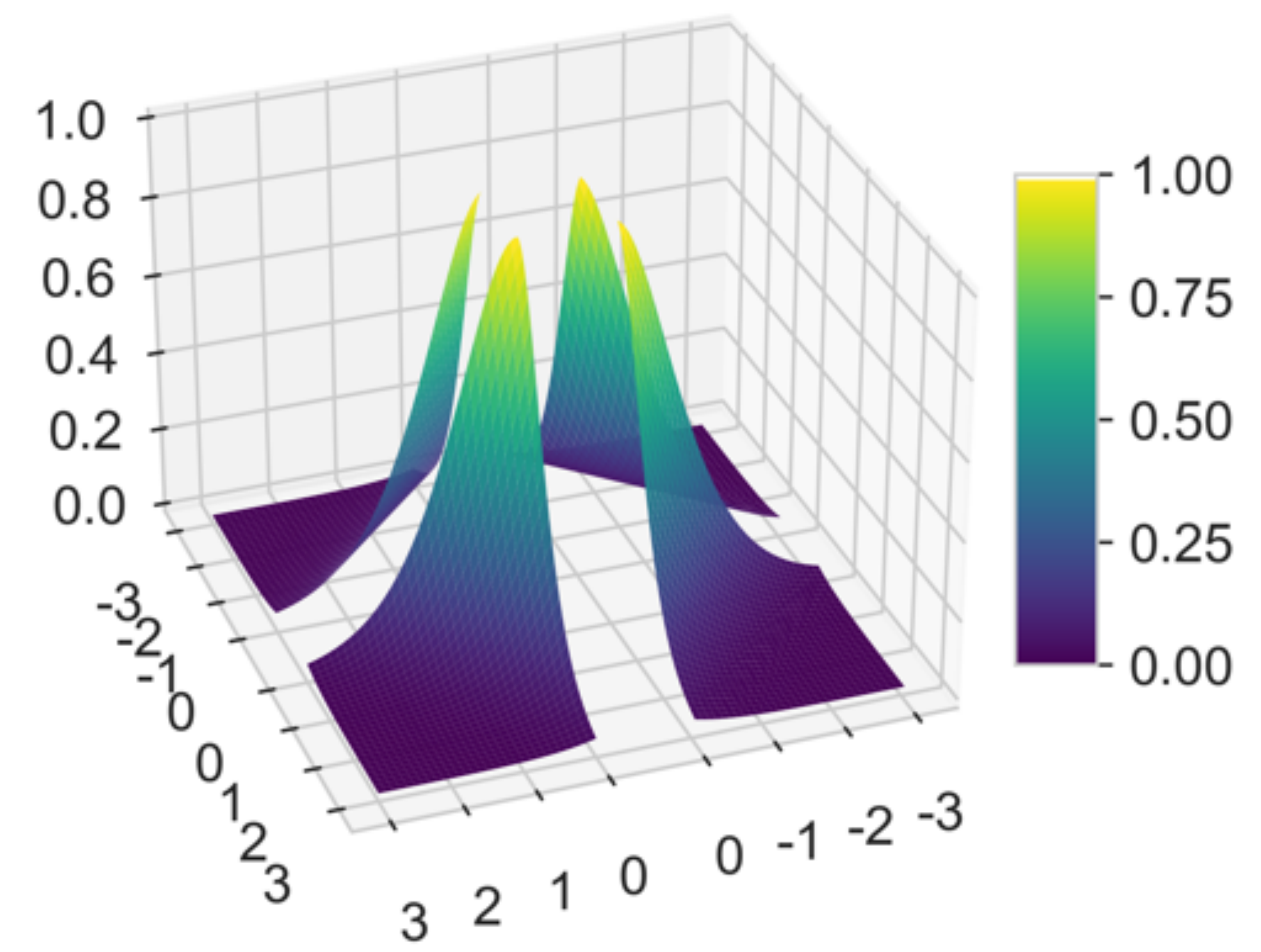}%
\end{minipage}

\caption{Left: bivariate Mat\'ern-5/2 kernel; Right: its decomposition into
four weighted CDFs: $\mathbb{P}(.\protect\leq x_{1},.\protect\leq x_{2})$,
$\mathbb{P}(.>x_{1},.\protect\leq x_{2})$, $\mathbb{P}(.\protect\leq x_{1},.>x_{2})$
and $\mathbb{P}(.>x_{1},.>x_{2})$. Computing the matrix-vector multiplication
\eqref{eq:Ky} costs $\mathcal{O}(N^{2})$ operations with the left-hand
side formulation, but only $\mathcal{O}(N\log N)$ operations with
the equivalent right-hand side formulation.\label{fig:kernel_decomposition}}
\end{figure}

What is now needed is a fast algorithm to compute multivariate CDFs. On multivariate datasets, empirical CDFs cannot anymore be computed
by data sorting and fast sum updating. Instead, a multivariate extension
by a divide-and-conquer algorithm was proposed in \citet{bentley1980divide}, \citet{bouchard2012monte}.
It makes it possible to compute $d$-dimensional empirical CDFs in
$O(N\log(N)^{d-1})$ operations for $d>1$. \citet{langrene2021fast}
provided a detailed implementation for the $d$-dimensional case and
showed how to compute all the required CDFs simultaneously.

In this paper, we further improve the implementation of the algorithm \cite{langrene2021fast} by storing the sorted subsets for each coordinate in a data structure. We then use the fast CDF computation algorithm \cite{langrene2021fast} where all the sorting steps are replaced by these stored results. While the computational complexity remains $O(N\log(N)^{d-1})$, this presorting step greatly reduces the complexity's multiplicative constant (by a factor of tens according to our experiments).



To sum
up, after data pre-sorting, matrix-vector multiplications of the type \eqref{eq:Ky}-\eqref{eq:kde}
can be computed in $O(N\log(N)^{d-1})$ operations using the exact fast kernel CDF decomposition formulas \eqref{eq:L1_decomposition}-\eqref{eq:product_decomposition}. Remark that since this fast kernel
MVM algorithm is exact (it does not introduce any approximation),
it sets the minimum computational speed that any alternative fast
approximate MVM algorithm should be capable of achieving in order to be competitive in practice.

One downside of this approach is the increased computational complexity as the
dimension $d$ of the problem grows, as seen from the $O(N\log(N)^{d-1})$
computational complexity, as well as the number of terms in the outer sums in formulas \eqref{eq:L1_decomposition}-\eqref{eq:product_decomposition} ($2^d P$ for the L1 formulation \eqref{eq:L1_kernel}, $(2P)^d$ for the product formulation \eqref{eq:product_kernel}). This makes this approach better suited for
``large $N$, small $d$'' types of problems, such as those encountered in spatial statistics and similar fields.

Another downside  is that not all stationary covariance
functions are compatible, since Assumption~\ref{assu:kernel_decomposition_multivariate} needs to be satisfied. Among infinite-support kernels, the only compatible kernels, to our knowledge, are defined as the product of the Laplacian kernel with another compatible kernel (such as a polynomial). This rules out the Gaussian kernel (a.k.a. squared exponential covariance
function), but includes all Mat\'ern kernels with $\nu=p+1/2$, $p\in\mathbb{N}$. Among compact kernels, we know from \cite{langrene2019fast} that a large class of kernels are compatible, including the class of symmetric beta kernels (uniform, triangular, parabolic, etc.).

\subsection{Multivariate  Mat\'ern kernel decomposition formulas\label{subsec:multivariate_matern_decomposition}}

Mat\'ern kernels are an important particular case of kernel functions satisfying Assumption~\ref{assu:kernel_decomposition_multivariate}. For convenience, we provide in this subsection the detail of the decomposition of several examples of multivariate Mat\'ern kernel covariance functions into sums of weighted empirical cumulative distribution functions, as established in equations \eqref{eq:L1_decomposition} and \eqref{eq:product_decomposition}. We also provide a similar decomposition for their gradient with respect to the lengthscale parameter $\ell$, see equation \eqref{eq:scaled_kernel}.

\begin{example} 
The scaled multivariate Mat\'ern-1/2 covariance kernel is given by
\begin{equation}
K_{1/2;\varsigma,\ell}(\mathbf{u})=\varsigma^{2}K_{1/2}(\mathbf{u}/\ell)=\varsigma^{2}k_{1/2}(\left\Vert \mathbf{u}\right\Vert _{1}/\ell)=\varsigma^{2}\exp\left(-\frac{1}{\ell}\sum_{k=1}^{d}\left|u_{k}\right|\right), \ \mathbf{u}\in\mathbb{R}^{d}\label{eq:matern_l1_12}
\end{equation}
Remark that this is the only multivariate kernel for which the product formulation \eqref{eq:product_kernel} and the L1 formulation \eqref{eq:L1_kernel} coincide. 
Using equations~\eqref{eq:k12_decomposition}-\eqref{eq:k12_decomposition_phi} and Proposition~\ref{prop:L1_kernel_decomposition}, the kernel sum $\sum_{i=1}^{N}y_{i}K_{1/2;\varsigma,\ell}(\mathbf{x}_{i}-\mathbf{z})$ admits the decomposition
\begin{align}
 & \sum_{i=1}^{N}y_{i}K_{1/2;\varsigma,\ell}(\mathbf{x}_{i}-\mathbf{z}) =  \varsigma^{2}\sum_{i=1}^{N}y_{i}\exp\left(-\frac{1}{\ell}\sum_{k=1}^{d}\left|x_{k,i}-z_{k}\right|\right)\nonumber \\
& = \ \varsigma^{2}\!\!\!\!\sum_{\boldsymbol{\delta}\in\{-1,1\}^{d}}\!\!\!\!\exp\left(-\frac{\boldsymbol{\delta.z}}{\ell}\right)F(\boldsymbol{\delta}\boldsymbol{z},\boldsymbol{\delta};\boldsymbol{\delta}\mathbf{x},\boldsymbol{w}^{(0)})\label{eq:matern_12_decomposition}
\end{align}
where the weights vector $\boldsymbol{w}^{(p)}=(w_{1}^{(p)},w_{2}^{(p)},\ldots,w_{d}^{(p)})\in\mathbb{R}^{d}$ is defined by
\begin{equation}
w_{i}^{(p)}=w_{i}^{(p)}(\nu,\boldsymbol{\delta}):=y_{i}\left(\boldsymbol{\delta.x}_{i}\right)^{p}\exp\left(\frac{\sqrt{2\nu}}{\ell}\boldsymbol{\delta.x}_{i}\right)\label{eq:matern_weights}
\end{equation}
for any $p\in\mathbb{N}$ and $i=1,2,\ldots,N$, where $\nu$ is the parameter of the Mat\'ern-$\nu$ kernel (in this example, $\nu=1/2$).
Similarly, the gradient with respect to the lengthscale $\ell$ can be shown to be equal to
\begin{align*}
 & \frac{\partial}{\partial\ell}\left(\sum_{i=1}^{N}y_{i}K_{1/2;\varsigma,\ell}(\mathbf{x}_{i}-\mathbf{z})\right)=\sum_{i=1}^{N}y_{i}\frac{\varsigma^{2}}{\ell^{2}}\left\Vert \mathbf{x}_{i}-\mathbf{z}\right\Vert _{1}\exp\left(-\frac{1}{\ell}\left\Vert \mathbf{x}_{i}-\mathbf{z}\right\Vert _{1}\right)\\
 & =\frac{\varsigma^{2}}{\ell^{2}}\sum_{\boldsymbol{\delta}\in\{-1,1\}^{d}}\exp\left(-\frac{1}{\ell}\boldsymbol{\delta.z}\right)\left(\boldsymbol{\delta.z}F(\boldsymbol{\delta}\boldsymbol{z},\boldsymbol{\delta};\boldsymbol{\delta}\mathbf{x},\boldsymbol{w}^{(0)})-F(\boldsymbol{\delta}\boldsymbol{z},\boldsymbol{\delta};\boldsymbol{\delta}\mathbf{x},\boldsymbol{w}^{(1)})\right).
\end{align*}
where $\boldsymbol{w}^{(0)}$ and $\boldsymbol{w}^{(1)}$ are defined by equation~\eqref{eq:matern_weights}.
\end{example}

\begin{example} 
The scaled multivariate L1 Mat\'ern-3/2 covariance kernel is given by
\begin{equation}
K_{3/2;\varsigma,\ell}(\mathbf{u})=\varsigma^{2}k_{3/2}(\left\Vert \mathbf{u}\right\Vert _{1}/\ell)=\varsigma^{2}\left(1+\frac{\sqrt{3}}{\ell}\sum_{k=1}^{d}\left|u_{k}\right|\right)\exp\left(-\frac{\sqrt{3}}{\ell}\sum_{k=1}^{d}\left|u_{k}\right|\right), \ \mathbf{u}\in\mathbb{R}^{d}\label{eq:matern_l1_32}
\end{equation}
Using equations~\eqref{eq:k32_decomposition}-\eqref{eq:k32_decomposition_phi} and Proposition~\ref{prop:L1_kernel_decomposition}, the kernel sum $\sum_{i=1}^{N}y_{i}K_{3/2;\varsigma,\ell}(\mathbf{x}_{i}-\mathbf{z})$ admits the decomposition
\begin{align}
 & \sum_{i=1}^{N}y_{i}K_{3/2;\varsigma,\ell}(\mathbf{x}_{i}-\mathbf{z}) = \varsigma^{2}\sum_{i=1}^{N}y_{i}\left(1+\frac{\sqrt{3}}{\ell}\sum_{k=1}^{d}\left|x_{k,i}-z_{k}\right|\right)\exp\left(-\frac{\sqrt{3}}{\ell}\sum_{k=1}^{d}\left|x_{k,i}-z_{k}\right|\right)\nonumber \\
 & = \  \varsigma^{2}\!\!\!\!\sum_{\boldsymbol{\delta}\in\{-1,1\}^{d}}\!\!\!\!\exp\!\left(-\frac{\sqrt{3}}{\ell}\boldsymbol{\delta.z}\right)\left(\left(1+\frac{\sqrt{3}}{\ell}\boldsymbol{\delta.z}\right)F(\boldsymbol{\delta}\boldsymbol{z},\boldsymbol{\delta};\boldsymbol{\delta}\mathbf{x},\boldsymbol{w}^{(0)})-\frac{\sqrt{3}}{\ell}F(\boldsymbol{\delta}\boldsymbol{z},\boldsymbol{\delta};\boldsymbol{\delta}\mathbf{x},\boldsymbol{w}^{(1)})\right)\label{eq:matern_32_decomposition}
\end{align}
where $\boldsymbol{w}^{(0)}$ and $\boldsymbol{w}^{(1)}$ are defined by equation~\eqref{eq:matern_weights}.
Similarly, the gradient with respect to the lengthscale $\ell$ can be shown to be equal to
\begin{align*}
 & \frac{\partial}{\partial\ell}\left(\sum_{i=1}^{N}y_{i}K_{3/2;\varsigma,\ell}(\mathbf{x}_{i}-\mathbf{z})\right)=\sum_{i=1}^{N}3y_{i}\frac{\varsigma^{2}}{\ell^{3}}\left\Vert \mathbf{x}_{i}-\mathbf{z}\right\Vert _{1}^{2}\exp\left(-\frac{\sqrt{3}}{\ell}\left\Vert \mathbf{x}_{i}-\mathbf{z}\right\Vert _{1}\right)\\
 & =3\frac{\varsigma^{2}}{\ell^{3}}\sum_{\boldsymbol{\delta}\in\{-1,1\}^{d}}\exp\left(-\frac{\sqrt{3}}{\ell}\boldsymbol{\delta.z}\right)\left[\left(\boldsymbol{\delta.z}\right)^{2}F(\boldsymbol{\delta}\boldsymbol{z},\boldsymbol{\delta};\boldsymbol{\delta}\mathbf{x},\boldsymbol{w}^{(0)})\right.\\
 & \left.-2\boldsymbol{\delta.z}F(\boldsymbol{\delta}\boldsymbol{z},\boldsymbol{\delta};\boldsymbol{\delta}\mathbf{x},\boldsymbol{w}^{(1)})+F(\boldsymbol{\delta}\boldsymbol{z},\boldsymbol{\delta};\boldsymbol{\delta}\mathbf{x},\boldsymbol{w}^{(2)})\right].
\end{align*}
\end{example}
\vspace{2mm}

\begin{example} 
The scaled multivariate product L1 Mat\'ern-3/2 covariance kernel is given by
\begin{equation}
K^{\Pi}_{3/2;\varsigma,\ell}(\mathbf{u})=\varsigma^{2}\prod_{k=1}^{d}k_{3/2}(\left|u_{k}\right|/\ell)=\varsigma^{2}\prod_{k=1}^{d}\left(1+\frac{\sqrt{3}}{\ell}\left|u_{k}\right|\right)\exp\!\left(-\frac{\sqrt{3}}{\ell}\sum_{k=1}^{d}\left|u_{k}\right|\right), \ \mathbf{u}\in\mathbb{R}^{d}\label{eq:eq:matern_product_32}
\end{equation}
For any $\mathbf{z}\in\mathbb{R}^{d}$, $\mathbf{x}_{i}\in\mathbb{R}^{d}$,
$\boldsymbol{\theta}\in\{0,1\}^{d}$ and $\boldsymbol{\delta}\in\{-1,1\}^{d}$,
introduce the weights $\pi^{(0)}(\mathbf{z})=\pi^{(0)}(\mathbf{z};\nu,\boldsymbol{\theta},\boldsymbol{\delta})\in\mathbb{R}$
and $\pi^{(1)}(\boldsymbol{x}_{i})=\pi^{(1)}(\boldsymbol{x}_{i};\nu,\boldsymbol{\theta},\boldsymbol{\delta})\in\mathbb{R}$
defined by
\begin{align}
\pi^{(0)}(\mathbf{z}) & =\prod_{k=1}^{d}\theta_{k}\left(1+\frac{\sqrt{2\nu}}{\ell}\delta_{k}z_{k}\right),\label{eq:matern_product_weights_0}\\
\pi^{(1)}(\boldsymbol{x}_{i}) & =\prod_{k=1}^{d}(1-\theta_{k})\left(-\frac{\sqrt{2\nu}}{\ell}\delta_{k}x_{k,i}\right),\label{eq:matern_product_weights_1}
\end{align}
and the weights vector $\boldsymbol{w}=(w_{1},w_{2},\ldots,w_{d})\in\mathbb{R}^{d}$
defined by
\begin{equation}
w_{i}=w_{i}(\nu,\boldsymbol{\theta},\boldsymbol{\delta}):=y_{i}\pi^{(1)}(\boldsymbol{x}_{i};\nu,\boldsymbol{\theta},\boldsymbol{\delta})\exp\left(\frac{\sqrt{2\nu}}{\ell}\boldsymbol{\delta.x}_{i}\right).\label{eq:matern_product_weights_vector}
\end{equation}
where $\nu=3/2$ in this example. Using these definitions, as well as equations~\eqref{eq:k32_decomposition}-\eqref{eq:k32_decomposition_phi} and Proposition~\ref{prop:product_kernel_decomposition}, the kernel sum $\sum_{i=1}^{N}y_{i}K^{\Pi}_{3/2;\varsigma,\ell}(\mathbf{x}_{i}-\mathbf{z})$ can be shown to admit the decomposition
\begin{align}
 & \sum_{i=1}^{N}y_{i}K_{3/2;\varsigma,\ell}^{\Pi}(\mathbf{x}_{i}-\mathbf{z})
=  \varsigma^{2}\sum_{i=1}^{N}y_{i}\prod_{k=1}^{d}\left(1+\frac{\sqrt{3}}{\ell}\left|x_{k,i}-z_{k}\right|\right)\exp\left(-\frac{\sqrt{3}}{\ell}\sum_{k=1}^{d}\left|x_{k,i}-z_{k}\right|\right)\nonumber \\
 & =\varsigma^{2}\!\!\sum_{\boldsymbol{\theta}\in\{0,1\}^{d}}\sum_{\boldsymbol{\delta}\in\{-1,1\}^{d}}\pi^{(0)}(\mathbf{z};\frac{3}{2},\boldsymbol{\theta},\boldsymbol{\delta})\exp\!\left(-\frac{\sqrt{3}}{\ell}\boldsymbol{\delta.z}\right)F(\boldsymbol{\delta}\boldsymbol{z},\boldsymbol{\delta};\boldsymbol{\delta}\mathbf{x},\boldsymbol{w})\label{eq:matern_product_32_decomposition}
\end{align}
Similarly, the gradient with respect to the lengthscale $\ell$ can be shown to be equal to
\begin{align*}
 & \frac{\partial}{\partial\ell}\left(\sum_{i=1}^{N}y_{i}K^{\Pi}_{3/2;\varsigma,\ell}(\mathbf{x}_{i}-\mathbf{z})\right)\\
 & =\frac{\varsigma^{2}}{\ell}\sum_{\boldsymbol{\theta}\in\{0,1\}^{d}}\sum_{\boldsymbol{\delta}\in\{-1,1\}^{d}}\sum_{k=1}^{d}\theta_{k}\frac{\left(\frac{\sqrt{3}}{\ell}\delta_{k}z_{k}\right)^{2}}{\left(1+\frac{\sqrt{3}}{\ell}\delta_{k}z_{k}\right)}\pi^{(0)}(\mathbf{z})\exp\left(-\frac{\sqrt{3}}{\ell}\boldsymbol{\delta.z}\right)F(\boldsymbol{\delta}\boldsymbol{z},\boldsymbol{\delta};\boldsymbol{\delta}\mathbf{x},\acute{\boldsymbol{w}}_{k})
\end{align*}
where the weights vector $\acute{\boldsymbol{w}}_{k}=(\acute{w}_{k,1},\acute{w}_{k,2},\ldots,\acute{w}_{k,d})\in\mathbb{R}^{d}$
is defined by 
\begin{equation}
\acute{w}_{k,i}=\acute{w}_{k,i}(\nu,\boldsymbol{\theta},\boldsymbol{\delta}):=y_{i}\left(-\frac{\sqrt{3}}{\ell}(1-\theta_{k})\delta_{k}x_{k,i}\right)\pi^{(1)}(\boldsymbol{x}_{i};\nu,\boldsymbol{\theta},\boldsymbol{\delta})\exp\!\left(\frac{\sqrt{2\nu}}{\ell}\boldsymbol{\delta.x}_{i}\right).\label{eq:matern_product_weights_vector_derivative}
\end{equation}
\end{example}

\vspace{5mm}
\begin{example} 
The scaled multivariate L1 Mat\'ern-5/2 covariance kernel is given by
\begin{align}
 & K_{5/2;\varsigma,\ell}(\mathbf{u})=\varsigma^{2}K_{5/2}(\mathbf{u}/\ell)=\varsigma^{2}k_{5/2}(\left\Vert \mathbf{u}\right\Vert _{1}/\ell)\nonumber \\
 & =\varsigma^{2}\left(1+\frac{\sqrt{5}}{\ell}\sum_{k=1}^{d}\left|u_{k}\right|+\frac{5}{3\ell^{2}}\left(\sum_{k=1}^{d}\left|u_{k}\right|\right)^{2}\right)\exp\left(-\frac{\sqrt{5}}{\ell}\sum_{k=1}^{d}\left|u_{k}\right|\right).\label{eq:matern_l1_52}
\end{align}
Using equations~\eqref{eq:k52_decomposition}-\eqref{eq:k52_decomposition_phi} and Proposition~\ref{prop:L1_kernel_decomposition}, the kernel sum $\sum_{i=1}^{N}y_{i}K_{5/2;\varsigma,\ell}(\mathbf{x}_{i}-\mathbf{z})$ admits the decomposition
\begin{align}
 & \sum_{i=1}^{N}y_{i}K_{5/2;\varsigma,\ell}(\mathbf{x}_{i}-\mathbf{z})\nonumber \\
 & = \  \varsigma^{2}\!\!\sum_{\boldsymbol{\delta}\in\{-1,1\}^{d}}\exp\!\left(-\frac{\sqrt{5}}{\ell}\boldsymbol{\delta.z}\right)\left[\left(1+\frac{\sqrt{5}}{\ell}\boldsymbol{\delta.z}+\frac{5}{3\ell^{2}}\left(\boldsymbol{\delta.z}\right)^{2}\right)F(\boldsymbol{\delta}\boldsymbol{z},\boldsymbol{\delta};\boldsymbol{\delta}\mathbf{x},\boldsymbol{w}^{(0)})\right.\nonumber \\
 & \ \ \ \ \left.-\left(\frac{\sqrt{5}}{\ell}+2\frac{5}{3\ell^{2}}\boldsymbol{\delta.z}\right)F(\boldsymbol{\delta}\boldsymbol{z},\boldsymbol{\delta};\boldsymbol{\delta}\mathbf{x},\boldsymbol{w}^{(1)})+\frac{5}{3\ell^{2}}F(\boldsymbol{\delta}\boldsymbol{z},\boldsymbol{\delta};\boldsymbol{\delta}\mathbf{x},\boldsymbol{w}^{(2)})\right]\label{eq:matern_52_decomposition}
\end{align}
and the gradient with respect to the lengthscale $\ell$ can be shown to be equal to
\begin{align*}
 & \frac{\partial}{\partial\ell}\left(\sum_{i=1}^{N}y_{i}K_{5/2;\varsigma,\ell}(\mathbf{x}_{i}-\mathbf{z})\right)=\sum_{i=1}^{N}\frac{5}{3}y_{i}\frac{\varsigma^{2}}{\ell^{3}}\left(\left\Vert \mathbf{x}_{i}-\mathbf{z}\right\Vert _{1}^{2}+\frac{\sqrt{5}}{\ell}\left\Vert \mathbf{x}_{i}-\mathbf{z}\right\Vert _{1}^{3}\right)\exp\left(-\frac{\sqrt{5}}{\ell}\left\Vert \mathbf{x}_{i}-\mathbf{z}\right\Vert _{1}\right)\\
 & =\frac{5}{3}\frac{\varsigma^{2}}{\ell^{3}}\sum_{\boldsymbol{\delta}\in\{-1,1\}^{d}}\exp\left(-\frac{\sqrt{5}}{\ell}\boldsymbol{\delta.z}\right)\left[( \frac{\sqrt{5}}{\ell}\left(\boldsymbol{\delta.z}\right)^{3} + 
 \left(\boldsymbol{\delta.z}\right)^{2})F(\boldsymbol{\delta}\boldsymbol{z},\boldsymbol{\delta};\boldsymbol{\delta}\mathbf{x},\boldsymbol{w}^{(0)})\right.\\
 & \quad  -\left.(2 (\delta.z)+ 3 \frac{\sqrt{5}}{\ell} \left(\boldsymbol{\delta.z}\right)^{2})F(\boldsymbol{\delta}\boldsymbol{z},\boldsymbol{\delta};\boldsymbol{\delta}\mathbf{x},\boldsymbol{w}^{(1)})+(3 \frac{\sqrt{5}}{\ell} \left(\boldsymbol{\delta.z}\right) +1) F(\boldsymbol{\delta}\boldsymbol{z},\boldsymbol{\delta};\boldsymbol{\delta}\mathbf{x},\boldsymbol{w}^{(2)})- \right.  \\
 &  \quad \quad \left.\frac{\sqrt{5}}{\ell}F(\boldsymbol{\delta}\boldsymbol{z},\boldsymbol{\delta};\boldsymbol{\delta}\mathbf{x},\boldsymbol{w}^{(3)})\right]\ .
\end{align*}
\end{example}

\section{Numerical results} \label{sec:numericalResults}

In this section, we begin by solving, on simulated data, the problem \eqref{eq:gp_log_likelihood} in dimensions one to three when the parameters $\bm{\theta}$ of the models are the scaling parameters defined in equation~\eqref{eq:scaled_kernel}. In a second part, we solve, on simulated data once again, the global problem \eqref{eq:gp_f} where both $m$ and $\sigma$ have to be estimated.
We detail the algorithm used and show that it converges correctly. 
In all the tests, the Mat\'ern-1/2 kernel is used.\\

To optimize in \eqref{eq:gp_log_likelihood} the parameters $\bm{\theta}$ of the models defined in \eqref{eq:scaled_kernel}, we classically use iterative methods. The most commonly used are LBFGS (we used the LBFGS++ library \cite{LBFGSpp}) and the ADAM algorithm \cite{kingma2014adam}. We found that in many examples the LBFGS algorithm fails in the line search due to a bad estimation of the descent direction. Moreover, the LBFGS algorithm not only relies on an estimate of the gradient, but also needs to evaluate the value of the objective function. We found that this evaluation fails when the parameters are too high or too small, due to the estimation of the $\log$ of the determinant : the Lanczos algorithm is known to be unstable because the Lanczos vectors form an orthogonal basis in theory, and this orthogonality can quickly be lost in practice, resulting in undefined values. Some re-orthogonalization could be done, see for example \cite{parlett1979lanczos}, but for simplicity, we decided to use the ADAM algorithm for all optimizations.\\
By default the initial learning rate is taken equal to $0.005$, decreasing linearly with the number of iterations to $0.0005$ for $20000$ iterations. We stop when the gradient norm is less than $10^{-3}$. Moreover, the function to be optimized has a very steep gradient near the optimum: during iterations, the gradient method can iterate a lot to stabilize, but the estimated parameters hardly change. Then, when the parameters show a change in norm less than $10^{-4}$, separated by one hundred gradient iterations, the process is stopped. During the conjugate iteration, we use a convergence criterion equal to $10^{-5}$, an incomplete Cholesky preconditioner with rank 100.
\begin{rem}
 The product matrix vector is not multithreaded: the running time could certainly be reduced further, even though divide-and-conquer-type algorithms are known to be difficult to parallelize. 
\end{rem}

\subsection{$d$-dimensional tests on simulated data} \label{subsection:simulatedDataDimD}
In this example derived from \cite{gyger2024iterative}, we optimize \eqref{eq:gp_log_likelihood} by taking the scaling parameters 
$\varsigma =1$, $\ell=0.1054$, and the nugget term $\sigma=1$.
The $\mathbf{x}$ are sampled uniformly on $[0,1]^d$, and the $y$ follow equation~\eqref{eq:gp_f} using $N$ samples drawn from the Mat\'ern-1/2 kernel.
Our iterative algorithm takes as initial values $\varsigma=0.5$ and $\ell=1$.

In  the tables  \ref{tab:sim1D}, \ref{tab:sim2D} and \ref{tab:sim3D}, we give the parameters estimated and the number of iterations used to converge. The time in seconds is given with an old Processor Intel Xeon® Gold 6234  (2019).

\begin{table}[H]
\noindent \begin{centering}
\begin{tabular}{lllll}
\toprule 
\multirow{2}{*}{\enskip{}$N$} & \multirow{2}{*}{\enskip{}Time} & \multicolumn{1}{l}{Number of} & \multicolumn{2}{l}{Parameters}\tabularnewline
 &  & iterations & $\varsigma$ & $\ell$\tabularnewline
\midrule 
\enskip{}20,000 & \enskip{}1956 & 6644 & 1.2048 & 0.1647\enskip{}\tabularnewline
\enskip{}40,000 & \enskip{}3655 & 7219 & 1.2436 & 0.1696\enskip{}\tabularnewline
100,000 & 20793 & 5033 & 0.6973 & 0.0514\enskip{}\tabularnewline
200,000 & 32143 & 7496 & 1.0602 & 0.1249\enskip{}\tabularnewline
\bottomrule
\end{tabular}
\par\end{centering}
\noindent \centering{}\caption{Estimation of parameters on simulated data for $d=1$.\label{tab:sim1D}}
\end{table}

\begin{table}[H]
\noindent \begin{centering}
\begin{tabular}{lllll}
\toprule 
\multirow{2}{*}{\enskip{}$N$} & \multirow{2}{*}{\enskip{}Time} & \multicolumn{1}{l}{Number of} & \multicolumn{2}{l}{Parameters}\tabularnewline
 &  & iterations & $\varsigma$ & $\ell$\tabularnewline
\midrule 
\enskip{}20,000 & \enskip{}1964 & 1601 & 0.9967 & 0.1098\enskip{}\tabularnewline
\enskip{}40,000 & 10211 & 1779 & 0.9964 & 0.1070\enskip{}\tabularnewline
100,000 & 22709 & 2176 & 0.9658 & 0.0973\enskip{}\tabularnewline
200,000 & 73818 & 2294 & 1.0062 & 0.1059\enskip{}\tabularnewline
\bottomrule
\end{tabular}
\par\end{centering}
\noindent \centering{}\caption{Estimation of parameters on simulated data for $d=2$.\label{tab:sim2D}}
\end{table}

\begin{table}[H]
\noindent \begin{centering}
\begin{tabular}{lllll}
\toprule 
\multirow{2}{*}{\enskip{}$N$} & \multirow{2}{*}{\enskip{}\enskip{}Time} & \multicolumn{1}{l}{Number of} & \multicolumn{2}{l}{Parameters}\tabularnewline
 &  & iterations & $\varsigma$ & $\ell$\tabularnewline
\midrule 
\enskip{}20,000 & \enskip{}\enskip{}6543 & \enskip{}750 & 1.0278 & 0.1107\enskip{}\tabularnewline
\enskip{}40,000 & \enskip{}21657 & \enskip{}918 & 0.9977 & 0.1108\enskip{}\tabularnewline
100,000 & 113826 & 1047 & 0.9994 & 0.1030\enskip{}\tabularnewline
\bottomrule
\end{tabular}
\par\end{centering}
\noindent \centering{}\caption{Estimation of parameters on simulated data for $d=3$.\label{tab:sim3D}}
\end{table}
We observe that the complexity of the algorithm clearly increases with the dimension, but surprisingly, on this example, the number of iterations needed decreases with the dimension, and the estimation of the parameters is more accurate.

\subsection{Join estimation of mean, scaling, and nugget parameters}
In this section our target is to solve the problem~\eqref{eq:gp_y}-\eqref{eq:gp_f} in dimension 2  using the dataset from \citet{gyger2024iterative}, assuming that $K$ in equation~\eqref{eq:gp_f} is a Mat\'ern-1/2 kernel with scaling  parameters $\varsigma>0$ and $\ell>0$,  as defined in equation~\eqref{eq:matern_l1_12}.

This data set is a series of temperatures $\{y_i\}_{i=1,\ldots,N}$ associated with some 2D coordinates $ \{\mathbf{x}_i = (x^W_i,x^N_i) \}_{i=1, \ldots, N}$ with  $N=400,000$. A point $\mathbf{x}$ is therefore defined  by   the longitude $x^W$ and the latitude $x^N$ of the temperature readings. We assume that the side effect function $m$ is affine:
\[
    m(\mathbf{x}) = \beta_0 + \beta_W x^W + \beta_N x^N.
\]
 where $\boldsymbol{\beta}= \{\beta_0 , \beta_W, \beta_N \}$ are three side effect parameters to be estimated. \\

The estimation of fixed effects, which is generally ignored in research articles on the subject, can be done by likelihood maximization while also estimating the kernel parameters. This method is used by \citet{guinness2021gaussian} or \citet{gyger2024iterative}, who developed a likelihood function analogous to our equation~\eqref{eq:gp_log_likelihood} that also incorporates the fixed effects coefficients. \citet{sigrist2022gaussian} uses this method, but also achieves good performance with a two-step procedure that iterates over the estimation of the fixed effects and the estimation of the kernel parameters. Here we first use ordinary least squares to get an initial estimate of $\boldsymbol{\beta}$, then we remove the estimated fixed effects $\hat{m}(\mathbf{x})$ from the initial data and proceed with estimating the kernel parameters on $\bar{y}=y - \hat{m}(\mathbf{x})$.
Since the nugget parameter $\sigma$ is unknown, we need to estimate it. To do so, we proceed as in \citet{sigrist2022gaussian}: we perform a gradient descent on the parameters $\varsigma$ and $\ell$, and at each step of the descent we update the estimated value of $\sigma$, as summarized in Algorithm~\ref{algo:gradientDescentSigma}.\par \medskip

As can be seen in Algorithm~\ref{algo:gradientDescentSigma}, one repeatedly computes the gradient and updates the estimates of $\varsigma$ and $\ell$, also obtaining the norm of the gradient. Then a matrix $\widetilde{K}$ is introduced, which is associated with the parameters $\frac{\varsigma_i}{\sigma_{i-1}}$ and $\ell_i$, so that
\[
    \mathbf{K}+\sigma^{2}\mathbf{I} = \sigma^2\left(\widetilde{\mathbf{K}} + \mathbf{I}\right),
\]
makes us interpret $\widetilde{\mathbf{K}}+\mathbf{I}$ as a covariance matrix associated with $\sigma=1$.
 Then the formula
\[
    \widetilde{\mathbf{K}} \longleftarrow \mathbf{K}\left(\frac{\varsigma_i}{\sigma_{i-1}}, \ell_i\right)
\]
used in Algorithm~\ref{algo:gradientDescentSigma} and
explained by \citet{sigrist2022gaussian} provides a closed-form expression for updating the estimate of $\sigma$ at each time step.
Once the gradient descent has converged, one uses the Generalized Least Squares estimator of $\boldsymbol{\beta}$, as defined for example in \cite{ludkovski2025gaussian}:
\[
    \hat{\beta}_{\mathrm{GLS}} = \left(\mathbf{H}^\top\left(\widehat{\mathbf{K}}+\hat{\sigma}^2\mathbf{I}\right)^{-1}\mathbf{H}\right)^{-1}\mathbf{H}^\top\left(\widehat{\mathbf{K}}+\hat{\sigma}^2\mathbf{I}\right)^{-1}y,
\]
where $\widehat{\mathbf{K}}$ is the covariance matrix computed with the estimated parameters $\hat{\varsigma}$ and $\hat{\ell}$, and $\mathbf{H}$ is the $N\times 3$ matrix with first a column of ones and then two columns with the West and then North coordinates of the points. We then repetitively perform the gradient descent and the update of $\hat{\beta}^{\mathrm{GLS}}$ until stability of $\hat{\beta}^{\mathrm{GLS}}$ is observed.\\

\begin{algorithm}[H]
{\scalefont{0.9}
    \caption{Successive gradient descents with control of the\\ successive $\beta$ coefficients and updates of the estimate for $\sigma$\label{algo:gradientDescentSigma}}
    \DontPrintSemicolon
    \KwIn{$y$ the vector of temperatures, size $N$ \\
          \hspace{3.4em}{\scalefont{0.93}$H$ the $(N,3)$ matrix with a column of ones and two columns of longitudes and latitudes} \\
          \hspace{3.2em} $\sigma_0$ an initial guess for the value of $\sigma$ \\
          \hspace{3.2em} $\varsigma_0$ an initial guess for the value of $\varsigma$ \\
          \hspace{3.2em} $\ell_0$ an initial guess for the value of $\ell$ \\
          \hspace{3.2em} $\varepsilon$ the size of the gradient to stop the optimization \\
          \hspace{3.2em} $\varepsilon_\beta$ the maximal size of the difference between successive values of $\beta$}
    \KwOut{$\hat{\sigma}$ an estimate of $\sigma$ \\
           \hspace{4.2em} $\hat{\varsigma}$ an estimate of $\varsigma$ \\
           \hspace{4.2em} $\hat{\ell}$ an estimate of $\ell$}
    \BlankLine
    $\hat{\beta}^{\mathrm{OLS}} \longleftarrow (H^\top H)^{-1} H^\top Y$ \tcp{OLS estimate}
    $y^{\mathrm{OLS}} \longleftarrow y-H\hat{\beta}^{\mathrm{OLS}}$ \tcp{Centered version of $y$ following OLS}
    $i\longleftarrow 0, j\longleftarrow 0$, $\varsigma_i^{0}\longleftarrow\varsigma_0$, $\sigma_i^{0}\longleftarrow\sigma_0$, $\ell_i^{0}\longleftarrow\ell_0$\;
    \Do{$\mathrm{gradNorm} > \varepsilon$}{\tcp{Gradient descent with $\beta$ issued from OLS}
        $j \longleftarrow j+1$\;
        $\varsigma_i^j,\ell_i^j,\mathrm{gradNorm} \longleftarrow \mathrm{GradDescStep}(y^{\mathrm{OLS}};\varsigma_i^{j-1},\ell_i^{j-1},\sigma_i^{j-1})$\;
        $\widetilde{\mathbf{K}} \longleftarrow \mathbf{K}(\frac{\varsigma_i^j}{\sigma_i^{j-1}}, \ell_i^j)$\;
        $(\sigma_i^j)^2 \longleftarrow \frac{1}{N}(y^{\mathrm{OLS}})^\top \left(\widetilde{\mathbf{K}}+\mathbf{I}\right)^{-1}y^{\mathrm{OLS}}$ \tcp{Update of $\sigma$, following \citet{sigrist2022gaussian}}
    }
    $\beta^{\mathrm{AVG}} \longleftarrow (0 \quad 0 \quad 0)^\top$\;
    $\beta^{\mathrm{old}} \longleftarrow \beta^{\mathrm{OLS}}$\;
    \Do{$\mathrm{changeInParam} > \varepsilon_\beta$}{\tcp{Repeat gradient descents until successive $\beta^{\mathrm{GLS}}$ get close}
        $\beta^{\mathrm{GLS}}\longleftarrow (H^\top [\mathbf{K}+(\sigma_i^j)^2\mathbf{I})]^{-1}H)^{-1}H^\top [\mathbf{K}+(\sigma_i^j)^2\mathbf{I}]^{-1}y$\;
        $\varsigma_{i+1}^0\longleftarrow\varsigma_i^j$, $\sigma_{i+1}^0\longleftarrow\sigma_i^j$, $\ell_{i+1}^0\longleftarrow\ell_i^j$\;
        $i\longleftarrow i+1$, $j\longleftarrow 0$\;
        $\beta^{\mathrm{AVG}} \longleftarrow \frac{i-1}{i} \beta^{\mathrm{AVG}} + \frac{1}{i} \beta^{\mathrm{GLS}}$ \tcp{Average of previous $\beta$}
        $y^{\mathrm{AVG}} \longleftarrow y-H\hat{\beta}^{\mathrm{AVG}}$\;
        \Do{$\mathrm{gradNorm} > \varepsilon$}{\tcp{Gradient descent with current $\beta^{\mathrm{AVG}}$}
            $j \longleftarrow j+1$\;
            $\varsigma_i^j,\ell_i^j,\mathrm{gradNorm} \longleftarrow \mathrm{GradDescStep}(y^{\mathrm{AVG}};\varsigma_i^{j-1},\ell_i^{j-1},\sigma_i^{j-1})$\;
            $\widetilde{\mathbf{K}} \longleftarrow \mathbf{K}(\frac{\varsigma_i^j}{\sigma_i^{j-1}}, \ell_i^j)$\;
            $(\sigma_i^j)^2 \longleftarrow \frac{1}{N}(y^{\mathrm{AVG}})^\top \left(\widetilde{\mathbf{K}}+\mathbf{I}\right)^{-1}y^{\mathrm{AVG}}$
        }
        $\mathrm{changeInParam} = \|\beta^{\mathrm{GLS}}-\beta^{\mathrm{old}}\|$\;
        $\beta^{\mathrm{old}} = \beta^{\mathrm{GLS}}$\;
    }
    $\hat{\sigma} \longleftarrow \sigma_i^j$\;
    $\hat{\varsigma} \longleftarrow \varsigma_i^j$\;
    $\hat{\ell} \longleftarrow \ell_i^j$\;
    $ \hat \beta^{\mathrm{GLS}} \longleftarrow \beta^{\mathrm{GLS}}$\;
    \Return $ \hat \beta^{\mathrm{GLS}}, \hat{\sigma}, \hat{\varsigma}, \hat{\ell}$
}
\end{algorithm}

\vspace{10mm}

We want to evaluate the numerical performance of our optimization algorithm designed to optimize the fixed effects, $\sigma$, and the scaling parameters. To do so, we simulate data with parameters similar to the real case and perform the estimation of $\{\boldsymbol{\beta}, \sigma , \varsigma, \ell \}$
using 40,000, then 100,000, and finally 200,000 data points. In each of these three cases, we get a stable $\hat{\beta}^{\mathrm{GLS}}$ after the third iteration of the gradient descent and computation of $\hat{\beta}^{\mathrm{GLS}}$. 
In Table~\ref{tab:simFromRealData} we report the parameters used for the simulation and the estimates we obtained. As we can see, the fixed effects and the nugget parameter are recovered quite accurately. The estimated values of $\varsigma$ and $\ell$ are further away from the real values used for the simulation.

\begin{table}[H]
\begin{centering}
\renewcommand{\arraystretch}{1.2} %
\begin{tabular}{lllllll}
\toprule 
\multicolumn{1}{l}{\enskip{}Parameters} & $\beta_{0}${\scalefont{0.9} ($\tccentigrade$)} & \multicolumn{1}{l}{$\beta_{W}${\scalefont{0.9} ($\tccentigrade\cdot\text{m}^{-1}$)}} & \multirow{1}{*}{$\beta_{N}${\scalefont{0.9} ($\tccentigrade\cdot\text{m}^{-1}$)}} & $\sigma${\scalefont{0.9} ($\tccentigrade$)} & $\varsigma${\scalefont{0.9} ($\tccentigrade$)} & $\ell${\scalefont{0.9} ($\text{m}^{-1}$)}\enskip{}\tabularnewline
\midrule 
\multirow{2}{*}{\enskip{}\makecell[l]{Value for the\\ simulation}} & \multirow{2}{*}{-53.0} & \multirow{2}{*}{$-8.40\cdot10^{-6}$} & \multirow{2}{*}{$4.50\cdot10^{-6}$} & \multirow{2}{*}{1.60} & \multirow{2}{*}{0.200} & \multirow{2}{*}{$4.00\cdot10^{5}$\enskip{}}\tabularnewline
 &  &  &  &  &  & \tabularnewline
\multirow{2}{*}{\enskip{}\makecell[l]{Estimated value\\ 40,000 points}} & \multirow{2}{*}{-54.0} & \multirow{2}{*}{$-8.50\cdot10^{-6}$} & \multirow{2}{*}{$4.60\cdot10^{-6}$} & \multirow{2}{*}{1.60} & \multirow{2}{*}{0.132} & \multirow{2}{*}{$1.91\cdot10^{5}$\enskip{}}\tabularnewline
 &  &  &  &  &  & \tabularnewline
\multirow{2}{*}{\enskip{}\makecell[l]{Estimated value\\ 100,000 points}} & \multirow{2}{*}{-53.0} & \multirow{2}{*}{$-8.41\cdot10^{-6}$} & \multirow{2}{*}{$4.46\cdot10^{-6}$} & \multirow{2}{*}{1.60} & \multirow{2}{*}{0.190} & \multirow{2}{*}{$2.43\cdot10^{5}$\enskip{}}\tabularnewline
 &  &  &  &  &  & \tabularnewline
\multirow{2}{*}{\enskip{}\makecell[l]{Estimated value\\ 200,000 points}} & \multirow{2}{*}{-54.5} & \multirow{2}{*}{$-8.64\cdot10^{-6}$} & \multirow{2}{*}{$4.45\cdot10^{-6}$} & \multirow{2}{*}{1.61} & \multirow{2}{*}{0.163} & \multirow{2}{*}{$3.51\cdot10^{5}$\enskip{}}\tabularnewline
 &  &  &  &  &  & \tabularnewline
\bottomrule
\end{tabular}
\par\end{centering}
\noindent \centering{}\caption{Parameters used for the simulation and values taken by the estimators.\label{tab:simFromRealData}}
\end{table}

We also examine the sensitivity of the gradient descent results to the initial values, the learning rate, and the maximum number of iterations of the ADAM optimizer (which affects the decreasing speed of the learning rate). To do this, we fix $\boldsymbol{\beta}$ and $\sigma$ to their true values --- because we have shown above that they are quite easy to estimate accurately. We only try to estimate $\varsigma$ and $\ell$. We perform their estimation many times with different settings using the first 40,000 lines of the sample. Our results are summarized in Table~\ref{tab:fixBetaSigma} and show that in our sample, the values taken by the estimators are independent of the starting point and stable with the different learning rates that we try.

\begin{table}[H]
{\scalefont{0.98}
    \centering
    \begin{tabular}{llllll}
        \toprule
        Starting & Initial lear- & Minimum & Maximum num- & Estimated & Estimated \\
        point & ning rate & learning rate & ber of iterations & $\varsigma${\scalefont{0.9} ($\tccentigrade$)} & $\ell${\scalefont{0.9} ($\text{m}^{-1}$)} \\
        \midrule
        $(0.35, 0.125)$ & $1\cdot 10^{-2}$ & $1\cdot 10^{-4}$ & 10,000 & 0.150 & $2.79\cdot 10^5$  \\
        $(0.35, 0.125)$ & $5\cdot 10^{-2}$ & $5\cdot 10^{-4}$ & 10,000 & 0.150 & $2.79\cdot 10^5$ \\
        $(0.35, 0.125)$ & $5\cdot 10^{-2}$ & $5\cdot 10^{-4}$ & \enskip{}1,000 & 0.150 & $2.79\cdot 10^5$ \\
        $(0.35, 0.125)$ & $5\cdot 10^{-2}$ & $5\cdot 10^{-4}$ & \enskip{}5,000 & 0.150 & $2.79\cdot 10^5$ \\
        $(0.85, 0.625)$ & $5\cdot 10^{-2}$ & $5\cdot 10^{-4}$ & 10,000 & 0.150 & $2.79\cdot 10^5$ \\
        \bottomrule
    \end{tabular}
    \caption{Estimation settings and results while $\boldsymbol{\beta}$ and $\sigma$ are fixed.\\ Parameters one should recover: $\varsigma = 0.200~\tccentigrade$, $\ell = 4.00 \cdot 10^5\text{~m}^{-1}$.}
    \label{tab:fixBetaSigma}
}
\end{table}

\section{Conclusion\label{sec:conclusionsec}}
We developed an exact fast kernel matrix-vector multiplication (MVM) algorithm, based on exact kernel decomposition into weighted empirical cumulative distribution functions, combined with fast multivariate CDF computation \cite{langrene2021fast}, and used it to speed up the optimization of the parameters of Gaussian process regression models.
This method, compatible with various types of kernels (those satisfying Assumption~\ref{assu:kernel_decomposition_multivariate}), is very well suited for ``large $N$, small $d$'' situations, where $N$ is the number of data points, and $d$ is the number of features of each data point.\\
The algorithm's reduced computational complexity and memory complexity compared to direct MVM allowed us to optimize models with large datasets consisting of hundreds of thousands of points in dimensions one to three. Our tests focused on Mat\'ern kernels, for which we provided explicit multivariate kernel decomposition formulas (equations \eqref{eq:matern_12_decomposition}-\eqref{eq:matern_32_decomposition}-\eqref{eq:matern_product_32_decomposition}-\eqref{eq:matern_52_decomposition}).\\
Our successful results obtained on datasets with hundreds of thousands of points rely on a sequential implementation without GPU acceleration. Further computational improvements could be obtained in the future by developing a parallel version of our fast CDF computation algorithm, opening the door to exact Gaussian process inference on datasets consisting of millions of points or more.

\section*{Acknowledgements}

The authors would like to thank Christian Walder (Google DeepMind) and Tim Gyger (University of Zurich) for fruitful discussions. Nicolas Langren\'e acknowledges the partial support of the Guangdong Provincial/Zhuhai Key Laboratory IRADS (2022B1212010006). Pierre Gruet and Xavier Warin acknowledge support from the FiME Lab.

\bibliographystyle{apalike}
\bibliography{biblio}

\begin{thebibliography}{}

\bibitem[Abdulah et~al., 2022]{abdulah2022second}
Abdulah, S., Alamri, F., Nag, P., Sun, Y., Ltaief, H., Keyes, D.~E., and
  Genton, M.~G. (2022).
\newblock The second competition on spatial statistics for large datasets.
\newblock {\em Journal of Data Science}, 20(4):439--460.

\bibitem[Abdulah et~al., 2023]{abdulah2023large}
Abdulah, S., Li, Y., Cao, J., Ltaief, H., Keyes, D.~E., Genton, M.~G., and Sun,
  Y. (2023).
\newblock Large-scale environmental data science with {ExaGeoStatR}.
\newblock {\em Environmetrics}, 34(1):e2770.

\bibitem[Bentley, 1980]{bentley1980divide}
Bentley, J. (1980).
\newblock Multidimensional divide-and-conquer.
\newblock {\em Communications of the ACM}, 23(4):214--229.

\bibitem[Binois and Wycoff, 2022]{binois2022survey}
Binois, M. and Wycoff, N. (2022).
\newblock A survey on high-dimensional {G}aussian process modeling with
  application to {B}ayesian optimization.
\newblock {\em ACM Transactions on Evolutionary Learning and Optimization},
  2(2):1--26.

\bibitem[Bouchard and Warin, 2012]{bouchard2012monte}
Bouchard, B. and Warin, X. (2012).
\newblock Monte-{C}arlo valuation of {A}merican options: facts and new
  algorithms to improve existing methods.
\newblock In {\em Numerical Methods in Finance: Bordeaux, June 2010}, pages
  215--255. Springer.

\bibitem[Bradley et~al., 2016]{bradley2016comparison}
Bradley, J.~R., Cressie, N., and Shi, T. (2016).
\newblock A comparison of spatial predictors when datasets could be very large.
\newblock {\em Statistics Surveys}, 10:100--131.

\bibitem[Chalupka et~al., 2013]{chalupka2013framework}
Chalupka, K., Williams, C.~K., and Murray, I. (2013).
\newblock A framework for evaluating approximation methods for {G}aussian
  process regression.
\newblock {\em Journal of Machine Learning Research}, 14:333--350.

\bibitem[Charlier et~al., 2021]{charlier2021kernel}
Charlier, B., Feydy, J., Glaunes, J.~A., Collin, F.-D., and Durif, G. (2021).
\newblock Kernel operations on the {GPU}, with autodiff, without memory
  overflows.
\newblock {\em Journal of Machine Learning Research}, 22(1):3457--3462.

\bibitem[Chen, 2013]{chen2013preconditioning}
Chen, J. (2013).
\newblock On the use of discrete {L}aplace operator for preconditioning kernel
  matrices.
\newblock {\em SIAM Journal on Scientific Computing}, 35(2):A577--A602.

\bibitem[Chen et~al., 2023]{chen2023parallel}
Chen, Y., Skon, L., Mccombs, J., Liu, Z., and Stathopoulos, A. (2023).
\newblock Parallel software for million-scale exact kernel regression.
\newblock In {\em Proceedings of the 37th International Conference on
  Supercomputing}, pages 313--323.

\bibitem[Coakley and Rokhlin, 2013]{coakley2013fast}
Coakley, E.~S. and Rokhlin, V. (2013).
\newblock A fast divide-and-conquer algorithm for computing the spectra of real
  symmetric tridiagonal matrices.
\newblock {\em Applied and Computational Harmonic Analysis}, 34(3):379--414.

\bibitem[Cutajar et~al., 2016]{cutajar2016preconditioning}
Cutajar, K., Osborne, M., Cunningham, J., and Filippone, M. (2016).
\newblock Preconditioning kernel matrices.
\newblock In {\em International Conference on Machine Learning}, pages
  2529--2538.

\bibitem[Deisenroth and Ng, 2015]{deisenroth2015distributed}
Deisenroth, M. and Ng, J.~W. (2015).
\newblock Distributed {G}aussian processes.
\newblock In {\em International Conference on Machine Learning}, pages
  1481--1490.

\bibitem[{\relax DLMF}, 2024]{DLMF}
{\relax DLMF} (2024).
\newblock {NIST Digital Library of Mathematical Functions}.
\newblock F.~W.~J. Olver, A.~B. {Olde Daalhuis}, D.~W. Lozier, B.~I. Schneider,
  R.~F. Boisvert, C.~W. Clark, B.~R. Miller, B.~V. Saunders, H.~S. Cohl, and
  M.~A. McClain, eds.

\bibitem[Gardner et~al., 2018a]{gardner2018gpytorch}
Gardner, J., Pleiss, G., Weinberger, K.~Q., Bindel, D., and Wilson, A.~G.
  (2018a).
\newblock {GP}y{T}orch: blackbox matrix-matrix {G}aussian process inference
  with {GPU} acceleration.
\newblock In {\em Advances in Neural Information Processing Systems},
  volume~31, pages 7576--7586.

\bibitem[Gardner et~al., 2018b]{gardner2018product}
Gardner, J., Pleiss, G., Wu, R., Weinberger, K., and Wilson, A. (2018b).
\newblock Product kernel interpolation for scalable {G}aussian processes.
\newblock In {\em International Conference on Artificial Intelligence and
  Statistics}, pages 1407--1416.

\bibitem[Genton, 2001]{genton2001kernels}
Genton, M. (2001).
\newblock Classes of kernels for machine learning: a statistics perspective.
\newblock {\em Journal of Machine Learning Research}, 2:299--312.

\bibitem[Gramacy and Apley, 2015]{gramacy2015local}
Gramacy, R. and Apley, D. (2015).
\newblock Local {G}aussian process approximation for large computer
  experiments.
\newblock {\em Journal of Computational and Graphical Statistics},
  24(2):561--578.

\bibitem[Gramacy, 2020]{gramacy2020surrogates}
Gramacy, R.~B. (2020).
\newblock {\em Surrogates: {G}aussian process modeling, design, and
  optimization for the applied sciences}.
\newblock CRC press.

\bibitem[Guinness, 2021]{guinness2021gaussian}
Guinness, J. (2021).
\newblock Gaussian process learning via {F}isher scoring of {V}ecchia’s
  approximation.
\newblock {\em Statistics and Computing}, 31(3):25.

\bibitem[Guttorp and Gneiting, 2006]{guttorp2006studies}
Guttorp, P. and Gneiting, T. (2006).
\newblock Studies in the history of probability and statistics {XLIX} on the
  {M}at{\'e}rn correlation family.
\newblock {\em Biometrika}, 93(4):989--995.

\bibitem[Gyger et~al., 2024]{gyger2024iterative}
Gyger, T., Furrer, R., and Sigrist, F. (2024).
\newblock Iterative methods for full-scale {G}aussian process approximations
  for large spatial data.
\newblock arXiv:2405.14492.

\bibitem[Harbrecht et~al., 2012]{harbrecht2012low}
Harbrecht, H., Peters, M., and Schneider, R. (2012).
\newblock On the low-rank approximation by the pivoted cholesky decomposition.
\newblock {\em Applied numerical mathematics}, 62(4):428--440.

\bibitem[Heaton et~al., 2019]{heaton2019case}
Heaton, M.~J., Datta, A., Finley, A.~O., Furrer, R., Guinness, J., Guhaniyogi,
  R., Gerber, F., Gramacy, R.~B., Hammerling, D., Katzfuss, M., Lindgren, F.,
  Nychka, D., Sun, F., and Zammit-Mangion, A. (2019).
\newblock A case study competition among methods for analyzing large spatial
  data.
\newblock {\em Journal of Agricultural, Biological and Environmental
  Statistics}, 24:398--425.

\bibitem[Hu et~al., 2022]{hu2022giga}
Hu, R., Chau, S.~L., Sejdinovic, D., and Glaun{\`e}s, J. (2022).
\newblock Giga-scale kernel matrix-vector multiplication on {GPU}.
\newblock In {\em Advances in Neural Information Processing Systems},
  volume~35, pages 9045--9057.

\bibitem[Huang et~al., 2021]{huang2021competition}
Huang, H., Abdulah, S., Sun, Y., Ltaief, H., Keyes, D.~E., and Genton, M.~G.
  (2021).
\newblock Competition on spatial statistics for large datasets.
\newblock {\em Journal of Agricultural, Biological and Environmental
  Statistics}, 26:580--595.

\bibitem[Hutchinson, 1989]{hutchinson1989stochastic}
Hutchinson, M.~F. (1989).
\newblock A stochastic estimator of the trace of the influence matrix for
  laplacian smoothing splines.
\newblock {\em Communications in Statistics-Simulation and Computation},
  18(3):1059--1076.

\bibitem[Katzfuss and Guinness, 2021]{katzfuss2021vecchia}
Katzfuss, M. and Guinness, J. (2021).
\newblock A general framework for {V}ecchia approximations of {G}aussian
  processes.
\newblock {\em Statistical Science}, 36(1):124--141.

\bibitem[Kingma and Ba, 2014]{kingma2014adam}
Kingma, D. and Ba, J. (2014).
\newblock Adam: a method for stochastic optimization.
\newblock arXiv:1412.6980.

\bibitem[Lanczos, 1950]{lanczos1950iteration}
Lanczos, C. (1950).
\newblock An iteration method for the solution of the eigenvalue problem of
  linear differential and integral operators.
\newblock {\em Journal of Research of the National Bureau of Standards}.

\bibitem[Langren{\'e} and Warin, 2019]{langrene2019fast}
Langren{\'e}, N. and Warin, X. (2019).
\newblock Fast and stable multivariate kernel density estimation by fast sum
  updating.
\newblock {\em Journal of Computational and Graphical Statistics},
  28(3):596--608.

\bibitem[Langren{\'e} and Warin, 2021]{langrene2021fast}
Langren{\'e}, N. and Warin, X. (2021).
\newblock Fast multivariate empirical cumulative distribution function with
  connection to kernel density estimation.
\newblock {\em Computational Statistics \& Data Analysis}, 162:107267.

\bibitem[Langren\'e et~al., 2024]{langrene2024mixture}
Langren\'e, N., Warin, X., and Gruet, P. (2024).
\newblock A spectral mixture representation of isotropic kernels to generalize
  random {F}ourier features.
\newblock arXiv:2411.02770.

\bibitem[Liu et~al., 2019]{liu2019understanding}
Liu, H., Cai, J., Ong, Y.-S., and Wang, Y. (2019).
\newblock Understanding and comparing scalable {G}aussian process regression
  for big data.
\newblock {\em Knowledge-Based Systems}, 164:324--335.

\bibitem[Liu et~al., 2020]{liu2020gaussian}
Liu, H., Ong, Y.-S., Shen, X., and Cai, J. (2020).
\newblock When {G}aussian process meets big data: a review of scalable {GP}s.
\newblock {\em IEEE Transactions on Neural Networks and Learning Systems},
  31(11):4405--4423.

\bibitem[Ludkovski and Risk, 2025]{ludkovski2025gaussian}
Ludkovski, M. and Risk, J. (2025).
\newblock {\em Gaussian Process Models for Quantitative Finance}.
\newblock Springer.

\bibitem[Maddox et~al., 2021a]{maddox2021conditioning}
Maddox, W., Stanton, S., and Wilson, A. (2021a).
\newblock Conditioning sparse variational {G}aussian processes for online
  decision-making.
\newblock In {\em Advances in Neural Information Processing Systems},
  volume~34, pages 6365--6379.

\bibitem[Maddox et~al., 2021b]{maddox2021iterative}
Maddox, W.~J., Kapoor, S., and Wilson, A.~G. (2021b).
\newblock When are iterative {G}aussian processes reliably accurate?
\newblock arXiv:2112.15246.

\bibitem[Martinsson and Tropp, 2020]{martinsson2020randomized}
Martinsson, P.-G. and Tropp, J.~A. (2020).
\newblock Randomized numerical linear algebra: foundations and algorithms.
\newblock {\em Acta Numerica}, 29:403--572.

\bibitem[Matthews et~al., 2017]{matthews2017gpflow}
Matthews, A. G. d.~G., Van Der~Wilk, M., Nickson, T., Fujii, K., Boukouvalas,
  A., Le{\'o}n-Villagr{\'a}, P., Ghahramani, Z., and Hensman, J. (2017).
\newblock {GP}flow: a {G}aussian process library using {T}ensor{F}low.
\newblock {\em Journal of Machine Learning Research}, 18(40):1--6.

\bibitem[Meanti et~al., 2020]{meanti2020kernel}
Meanti, G., Carratino, L., Rosasco, L., and Rudi, A. (2020).
\newblock Kernel methods through the roof: handling billions of points
  efficiently.
\newblock In {\em Advances in Neural Information Processing Systems},
  volume~33, pages 14410--14422.

\bibitem[Nguyen et~al., 2019]{nguyen2019exact}
Nguyen, D.-T., Filippone, M., and Michiardi, P. (2019).
\newblock Exact gaussian process regression with distributed computations.
\newblock In {\em Proceedings of the 34th ACM/SIGAPP symposium on Applied
  Computing}, pages 1286--1295.

\bibitem[Noack et~al., 2023]{noack2023exact}
Noack, M.~M., Krishnan, H., Risser, M.~D., and Reyes, K.~G. (2023).
\newblock Exact {G}aussian processes for massive datasets via non-stationary
  sparsity-discovering kernels.
\newblock {\em Scientific reports}, 13(1):3155.

\bibitem[Parlett and Scott, 1979]{parlett1979lanczos}
Parlett, B. and Scott, D. (1979).
\newblock The {L}anczos algorithm with selective orthogonalization.
\newblock {\em Mathematics of Computation}, 33(145):217--238.

\bibitem[Petkovi{\'c} and Stanimirovi{\'c}, 2009]{petkovic2009matrix}
Petkovi{\'c}, M.~D. and Stanimirovi{\'c}, P.~S. (2009).
\newblock Generalized matrix inversion is not harder than matrix
  multiplication.
\newblock {\em Journal of Computational and Applied Mathematics},
  230(1):270--282.

\bibitem[Porcu et~al., 2024]{porcu2024matern}
Porcu, E., Bevilacqua, M., Schaback, R., and Oates, C. (2024).
\newblock The {M}at\'ern model: a journey through statistics, numerical
  analysis and machine learning.
\newblock {\em Statistical Science}, 39(3):469--492.

\bibitem[Qiu, 2020]{LBFGSpp}
Qiu, Y. (2020).
\newblock {LBFGS++}.

\bibitem[Qui\~{n}onero Candela et~al., 2007]{quinonero2007approximation}
Qui\~{n}onero Candela, J., Rasmussen, C.~E., and Williams, C.~K. (2007).
\newblock Approximation methods for {G}aussian process regression.
\newblock In {\em Large-scale kernel machines}, pages 203--223. MIT Press.

\bibitem[Rasmussen and Williams, 2006]{rasmussen2006gaussian}
Rasmussen, C.~E. and Williams, C. K.~I. (2006).
\newblock {\em Gaussian processes for machine learning}.
\newblock The MIT Press.

\bibitem[Rudi et~al., 2017]{rudi2017falkon}
Rudi, A., Carratino, L., and Rosasco, L. (2017).
\newblock {FALKON}: an optimal large scale kernel method.
\newblock In {\em Advances in Neural Information Processing Systems},
  volume~30, pages 3888--3898.

\bibitem[Ryan et~al., 2022]{ryan2022fast}
Ryan, J.~P., Ament, S.~E., Gomes, C.~P., and Damle, A. (2022).
\newblock The fast kernel transform.
\newblock In {\em International Conference on Artificial Intelligence and
  Statistics}, pages 11669--11690.

\bibitem[Saad, 2003]{saad2003iterative}
Saad, Y. (2003).
\newblock {\em Iterative methods for sparse linear systems}.
\newblock SIAM.

\bibitem[Sigrist, 2022]{sigrist2022gaussian}
Sigrist, F. (2022).
\newblock Gaussian process boosting.
\newblock {\em Journal of Machine Learning Research}, 23(232):1--46.

\bibitem[Snelson and Ghahramani, 2005]{snelson2005sparse}
Snelson, E. and Ghahramani, Z. (2005).
\newblock Sparse {G}aussian processes using pseudo-inputs.
\newblock In {\em Advances in Neural Information Processing Systems},
  volume~18, pages 1257--1264.

\bibitem[Stein et~al., 2012]{stein2012difference}
Stein, M.~L., Chen, J., and Anitescu, M. (2012).
\newblock Difference filter preconditioning for large covariance matrices.
\newblock {\em SIAM Journal on Matrix Analysis and Applications}, 33(1):52--72.

\bibitem[Sun et~al., 2012]{sun2012geostatistics}
Sun, Y., Li, B., and Genton, M.~G. (2012).
\newblock Geostatistics for large datasets.
\newblock In {\em Advances and challenges in space-time modelling of natural
  events}, pages 55--77.

\bibitem[Titsias, 2009]{titsias2009variational}
Titsias, M. (2009).
\newblock Variational learning of inducing variables in sparse {G}aussian
  processes.
\newblock In {\em Artificial Intelligence and Statistics}, volume~5, pages
  567--574.

\bibitem[Wang et~al., 2023]{wang2023matern}
Wang, K., Abdulah, S., Sun, Y., and Genton, M.~G. (2023).
\newblock Which parameterization of the {M}at{\'e}rn covariance function?
\newblock {\em Spatial Statistics}, 58:100787.

\bibitem[Wang et~al., 2019]{wang2019exact}
Wang, K., Pleiss, G., Gardner, J., Tyree, S., Weinberger, K.~Q., and Wilson,
  A.~G. (2019).
\newblock Exact {G}aussian processes on a million data points.
\newblock In {\em Advances in Neural Information Processing Systems},
  volume~32, pages 14648--14659.

\bibitem[Wenger et~al., 2022]{wenger2022preconditioning}
Wenger, J., Pleiss, G., Hennig, P., Cunningham, J., and Gardner, J. (2022).
\newblock Preconditioning for scalable {G}aussian process hyperparameter
  optimization.
\newblock In {\em International Conference on Machine Learning}, pages
  23751--23780.

\bibitem[Woodbury, 1950]{woodbury1950inverting}
Woodbury, M.~A. (1950).
\newblock {\em Inverting modified matrices}.
\newblock Department of Statistics, Princeton University.

\bibitem[Zhang, 2022]{zhang2022conjugate}
Zhang, L. (2022).
\newblock {\em Applications of conjugate gradient in {B}ayesian computation},
  pages 1--7.
\newblock John Wiley \& Sons.

\bibitem[Zhang and Williamson, 2019]{zhang2019parallel}
Zhang, M.~M. and Williamson, S.~A. (2019).
\newblock Embarrassingly parallel inference for {G}aussian processes.
\newblock {\em Journal of Machine Learning Research}, 20(169):1--26.

\end{thebibliography}

\appendix

\section{Computational aspects of likelihood optimization\label{sec:likelihood_optimization}}

In the introduction, we explain that the covariance kernel depends on some hyperparameters $\bm{\theta}$ which have to be calibrated on the dataset. We do so by maximizing the log-marginal likelihood~\eqref{eq:gp_log_likelihood}. In this section we highlight the main steps of the computation of the log-marginal likelihood or of its gradient with respect to $\bm{\theta}$, given by equation~\eqref{eq:gp_log_likelihood_derivative}. As highlighted in \citet{gardner2018gpytorch}, the computationally expensive tasks for computing the log-marginal likelihood and its gradient are (1) linear solves $(\mathbf{K}+\sigma^{2}\mathbf{I})^{-1}\mathbf{y}$, (2) computations of logarithms of determinants $\mathrm{\log}(\det(\mathbf{K}+\sigma^{2}\mathbf{I}))$, and (3) computations of trace terms $\mathrm{tr}\left((\mathbf{K}+\sigma^{2}\mathbf{I})^{-1}\frac{\partial\mathbf{K}}{\partial\bm{\theta}}\right)$: (1) is computed using conjugate gradients, (2) is obtained from a \citet{lanczos1950iteration} tridiagonalization procedure associated with conjugate gradients, and (3) is obtained using stochastic trace estimation. First we discuss the estimation steps with no preconditioning, and then we explain how to adapt the method when introducing a preconditioner in the computations.\par
The conjugate gradient algorithm is a popular method to solve a system of linear equations associated with a positive-definite symmetric matrix. It is presented e.g. in \citet{saad2003iterative}. Although it converges in $n$ iterations where $n$ is the size of its matrix, an appropriate preconditioning step can greatly reduce the number of required iterations. To compute $(\mathbf{K}+\sigma^{2}\mathbf{I})^{-1}\mathbf{y}$, the conjugate gradient algorithm does not need $(\mathbf{K}+\sigma^{2}\mathbf{I})$ itself, but only a function that computes matrix-vector products featuring this matrix. In practice, we use the fast matrix-vector multiplication algorithms introduced in Section~\ref{sec:kernel_mvm}.

\subsection{Conjugate gradient with multiple second members and partial Lanczos tridiagonalization to compute the logarithms of the determinants}

As was done in \citet{gardner2018gpytorch}, we use a modified version of the conjugate gradient algorithm that takes $\ell \geq 1$ second members $\mathbf{z}_1, ..., \mathbf{z}_\ell$ as inputs and also computes several partial Lanczos tridiagonal matrices associated with $(\mathbf{K}+\sigma^{2}\mathbf{I})$. Lanczos tridiagonalization is a procedure, named after \citet{lanczos1950iteration}, which consists in writing the approximation $VTV^T$ of a Hermitian $n\times n$ matrix, where $T$ is a tridiagonal real $m \times m$ matrix and $V$ is a $n\times m$ matrix with orthonormal columns. If $m=n$ then the decomposition is exact. Here we perform only a partial tridiagonalization by using the links between conjugate gradient and Lanczos decomposition, which are discussed in \citet[Section 6.7]{saad2003iterative}: one will perform $m$ iterations of the conjugate gradient and get $\ell$ times $m\times m$ Lanczos tridiagonal matrices as well as the approximate solutions $(\mathbf{K}+\sigma^{2}\mathbf{I})^{-1} \mathbf{z}_1, ..., (\mathbf{K}+\sigma^{2}\mathbf{I})^{-1} \mathbf{z}_\ell$. Equation~(6.82) in \citet{saad2003iterative} shows how to compute Lanczos tridiagonal coefficients using the conjugate gradient steps, and Algorithm~2 in the paper supplement of \citet{gardner2018gpytorch} exhibits a possible implementation of the modified conjugate gradient algorithm. Getting the Lanczos coefficients from the conjugate gradient steps avoids facing the numerical instabilities associated with the sole Lanczos algorithm and reported e.g. by \citet{saad2003iterative}.\par

\begin{algorithm}[H]
    \caption{Modified conjugate gradient algorithm with preconditioning\label{algo:MCGLanczos}}
    \DontPrintSemicolon
    \KwIn{$f_M$ a function to multiply $M$ by a matrix at right \\
          \hspace{3.2em} $B$ a $N\times \ell$ matrix \\
          \hspace{3.2em} $\bar{n}_I$ a maximal number of iterations to perform \\
          \hspace{3.2em} $\mathrm{tol}$ the tolerance on the norm of the residuals one aims at reaching \\
          \hspace{3.2em} $M_B^0$ an initial guess about the value of $M^{-1}B$ \\
          \hspace{3.2em} $s_P$ a function to multiply the inverse of the preconditioner $P$ by a matrix at right}
    \KwOut{$M_B$ an approximation of $M^{-1}B$ \\
           \hspace{4.2em} $n_I$ the realized number of iterations \\
           \hspace{4.2em} $T_1, ..., T_\ell$ tridiag. matrices from partial Lanczos decompositions of $M$}
    \BlankLine
    $R \longleftarrow B - f_M(M_B^0)$ \tcp{Initial residual}
    $\overline{P} \longleftarrow s_P(R)$ \tcp{Initial search direction}
    \For{$j \longleftarrow 1$ \KwTo $\ell$}{
        $k[j] \longleftarrow \overline{P}[\cdot, j]^T R[\cdot, j]$
    }
    $n_I \longleftarrow 0$\;
    $M_B \longleftarrow M_B^0$\;
    $T_1, ..., T_\ell \longleftarrow \mathbf{0}_{\bar{n}_I, \bar{n}_I}$\;
    $\alpha_\mathrm{prev} \longleftarrow \mathbf{1}_\ell$\;
    $\beta_\mathrm{prev} \longleftarrow \mathbf{0}_\ell$
    \BlankLine
    \While{$n_I < \bar{n}_I$}{
        $T \longleftarrow f_M\left(\overline{P}\right)$\;
        \For{$j \longleftarrow 1$ \KwTo $\ell$}{
            $\alpha[j] \longleftarrow k[j] \; / \; \left(\overline{P}[\cdot, j]^T T[\cdot, j]\right)$\;
            $M_B[\cdot, j] \longleftarrow M_B[\cdot, j] + \alpha[j] \overline{P}[\cdot, j] $\;
            $R[\cdot, j] \longleftarrow R[\cdot, j] - \alpha[j] T[\cdot, j]$
        }
        \If{$\|R\|^2 < \mathrm{tol}$}{
            \Return $M_B, n_I, T_1, ..., T_\ell$
        }
        $Z \longleftarrow s_P(R)$ \tcp{Solve $PZ = R$}
        $k_\mathrm{prev} \longleftarrow k$\;
        $n_I \longleftarrow n_I + 1$\;
        \For{$j \longleftarrow 1$ \KwTo $\ell$}{
            $k[j] \longleftarrow Z[\cdot, j]^T R[\cdot, j]$\;
            $\beta[j] \longleftarrow k[j] / k_\mathrm{prev}[j]$\;
            $\overline{P}[\cdot, j] \longleftarrow Z[\cdot, j] + \beta[j]\overline{P}[\cdot, j]$\;
            $T_j[n_I, n_I] \longleftarrow 1 / \alpha[j] + \beta_\mathrm{prev}[j] / \alpha_\mathrm{prev}[j]$\;
            $T_j[n_I + 1, n_I], T_j[n_I, n_I + 1] \longleftarrow \sqrt{\beta[j]} / \alpha[j]$
        }
        $\alpha_\mathrm{prev} \longleftarrow \alpha$\;
        $\beta_\mathrm{prev} \longleftarrow \beta$
    }
    \Return $M_B, n_I, T_1, ..., T_\ell$
\end{algorithm}

Algorithm~\ref{algo:MCGLanczos} details the steps of the conjugate gradient with multiple second members and the computation of Lanczos tridiagonal matrices. This algorithm is used in many places in the code as several linear solves have to be done, but Lanczos matrices are computed only to get the logarithm of the determinant appearing in the log-marginal likelihood. When $l$ Lanczos tridiagonal matrices of size $m$ have to be computed, $m$ iterations of Algorithm~\ref{algo:MCGLanczos} with a null initial guess (that is, $M_B^0 = \mathbf{0}_{N\times \ell}$) are used to guarantee that the Lanczos decompositions are performed with the vectors $\mathbf{z}_1, ..., \mathbf{z}_\ell$ as initial vectors, see \citet{saad2003iterative}.

The details of the computation of the logarithms of determinants, also performed by \citet{gardner2018gpytorch} and \citet{gyger2024iterative}, is as follows: let us choose initial probe vectors $\mathbf{z}_1, ..., \mathbf{z}_\ell$ in $\mathbb{R}^N$ that are independently sampled according to the same Gaussian law with mean $0$ and variance $\mathbf{I}$, the identity matrix of size $N\times N$. Let us also use the null vector of $\mathbf{R}^N$ as the initial solution guess in the conjugate gradient algorithm. We have 
\[
    \mathrm{\log}(\det(\mathbf{K}+\sigma^{2}\mathbf{I})) = \mathrm{tr}(\mathrm{\log}(\mathbf{K}+\sigma^{2}\mathbf{I})),
\]
where $\log(M)$ is the matrix logarithm of the matrix $M$. 
We denote $T_i$, $i=1,...,\ell$ the tridiagonal matrices obtained from  the partial Lanczos decompositions $V_i T_i V_i^T$ of $(\mathbf{K}+\sigma^{2}\mathbf{I})$ of size $m<N$ where the algorithm is initialized with $\mathbf{z}_i$.\\
We use the stochastic trace estimator of \citet{hutchinson1989stochastic} to get
\begin{align*}
    \mathrm{\log}(\det(\mathbf{K}+\sigma^{2}\mathbf{I})) &= \mathbb{E}(\mathbf{z}_1^T \mathrm{\log}(\mathbf{K}+\sigma^{2}\mathbf{I}) \mathbf{z}_1) \\
    & \approx \frac{1}{\ell} \sum_{i=1}^\ell \mathbf{z}_i^T \mathrm{\log}(\mathbf{K}+\sigma^{2}\mathbf{I}) \mathbf{z}_i \\
    & \approx \frac{1}{\ell} \sum_{i=1}^\ell \mathbf{z}_i^T V_i\mathrm{\log}(T_i)V_i^T \mathbf{z}_i
\end{align*}
featuring also the Monte Carlo approximation first, and then the partial Lanczos decompositions. 
In order to further approximate the logarithm of the determinant, we underline the fact that

\[
    V_i^T \mathbf{z}_i = \|\mathbf{z}_i\|_2 \mathbf{e}_1 \approx \sqrt{N} \mathbf{e}_1,
\]
where $\|\mathbf{z}_i\|_2\approx \sqrt{N}$ is the $L^2$-norm of $\mathbf{z}_i$ and $\mathbf{e}_1$ is the vector $(1 \; 0 \; \cdots \; 0)^T$ of size $N$. The equality holds because the columns of the matrices $V_i$ are orthonormal and, by design of the Lanczos algorithm, the first column of $V_i$ is equal to the normalized probe vector if one uses a null initial guess in Algorithm~\ref{algo:MCGLanczos}. Thus it follows that
\begin{equation}
    \mathrm{\log}(\det(\mathbf{K}+\sigma^{2}\mathbf{I})) \approx \frac{N}{\ell} \sum_{i=1}^\ell \mathbf{e}_1^T\mathrm{\log}(T_i)\mathbf{e}_1,
    \label{eqn:approximationLogDet}
\end{equation}
and then the calculus of the matrix logarithms $\mathrm{\log}(T_i)$ is computationally efficient, see for instance \citet{coakley2013fast}.\\

\subsection{Stochastic trace estimation to compute the trace terms}

As in \citet{gardner2018gpytorch} and \citet{gyger2024iterative}, we use the stochastic trace estimator of \citet{hutchinson1989stochastic}, which is unbiased, to get
\[
    \mathrm{tr}\left((\mathbf{K}+\sigma^{2}\mathbf{I})^{-1}\frac{\partial\mathbf{K}}{\partial\bm{\theta}}\right) \approx \frac{1}{\ell} \sum_{i=1}^\ell \mathbf{z}_i^T (\mathbf{K}+\sigma^{2}\mathbf{I})^{-1}\frac{\partial\mathbf{K}}{\partial\bm{\theta}} \mathbf{z}_i,
\]
which features the linear solves $(\mathbf{K}+\sigma^{2}\mathbf{I})^{-1} z_i$ and the matrix-vector products $\frac{\partial\mathbf{K}}{\partial\bm{\theta}} \mathbf{z}_i$. The former are approximated by the outputs of the modified conjugate gradient described above, and the latter are obtained efficiently.

\subsection{Preconditioning}

In our computations we turned to preconditioning, which consists in introducing an invertible matrix $\mathbf{P}$ and solving $\mathbf{P}^{-1}(\mathbf{K}+\sigma^{2}\mathbf{I}) \mathbf{x} = \mathbf{P}^{-1}\mathbf{y}$ instead of $(\mathbf{K}+\sigma^{2}\mathbf{I}) \mathbf{x} = \mathbf{y}$. The convergence of the conjugate gradient may be quicker with a preconditioner as it depends on the conditioning of $\mathbf{P}^{-1}(\mathbf{K}+\sigma^{2}\mathbf{I})$ instead of the one of $\mathbf{K}+\sigma^{2}\mathbf{I}$. We use the pivoted Cholesky decomposition \cite{harbrecht2012low} to provide a preconditioner. We defer to \citet{gardner2018gpytorch} and \citet{gyger2024iterative} for the details about the computation and its efficiency. Their analysis underlines the cost of linear solves using the preconditioner is $\mathcal{O}(N^2k)$ if one chooses a preconditioner with an underlying Cholesky decomposition of rank $k\leq N$, that is, the preconditioner is written as $LL^T + \sigma^2 I_N$, where $L$ is a lower triangular $N\times k$ matrix and $L L^T$ is the rank $k$ pivoted Cholesky decomposition of $\mathbf{K}$.
\par \medskip

The computation of $\mathbf{P}^{-1}y$ is done thanks to the Woodbury formula \cite{woodbury1950inverting}, which yields
\[
    \mathbf{P}^{-1}y = \frac{1}{\sigma^2}y - \frac{1}{\sigma^4} L \left(I_k + \frac{1}{\sigma^2}L^T L\right)^{-1} L^T y.
\]

A side effect of using a preconditioner $\mathbf{P}$ is that the sum in Equation~\eqref{eqn:approximationLogDet} no longer yields $\mathrm{\log}(\det(\mathbf{K}+\sigma^{2}\mathbf{I}))$, but $\mathrm{\log}(\det(\mathbf{P}^{-1}(\mathbf{K}+\sigma^{2}\mathbf{I})))$ instead, because it is based on partial Lanczos decompositions of $\mathbf{P}^{-1}(\mathbf{K}+\sigma^{2}\mathbf{I})$. The logarithm of the determinant can be obtained as
\[
    \mathrm{\log}(\det(\mathbf{K}+\sigma^{2}\mathbf{I})) = \mathrm{\log}(\det(\mathbf{P}^{-1}(\mathbf{K}+\sigma^{2}\mathbf{I}))) + \mathrm{\log}(\det(\mathbf{P})).
\]
The first term of the sum is approximated using Equation~\eqref{eqn:approximationLogDet} while preconditioning, and the second term is obtained using Sylvester's determinant theorem:
\begin{align*}
    \det(L L^T + \sigma^2 I_N) &= \det\left(\sigma^2 I_N\left(\frac{1}{\sigma^2}L L^T + I_N\right)\right) \\
    &= \sigma^{2N}\det\left(\frac{1}{\sigma^2}L L^T + I_N\right) \\
    &= \sigma^{2N}\det\left(\frac{1}{\sigma^2}L^T L + I_k\right) 
\end{align*}
so that
\[
    \mathrm{\log}(\det(L L^T + \sigma^2 I_N)) =2N \mathrm{\log}(\sigma) + \mathrm{\log}\left(\det\left(\frac{1}{\sigma^2}L^T L + I_k\right)\right)
\]
which boils down to computing the determinant of a symmetric matrix of size $k\times k$.

\section{Positive definite multivariate kernels\label{sec:positive_definite}}

This appendix provides conditions under which the multivariate kernels
\eqref{eq:product_kernel} and \eqref{eq:L1_kernel} are positive
definite.
\begin{prop}
\label{prop:positive_definite_product_kernel}Suppose that the multivariate
isotropic kernel $K(\mathbf{u})=k(\left\Vert \mathbf{u}\right\Vert _{2})$,
$\mathbf{u}\in\mathbb{R}^{d}$, is positive definite for all $d\geq1$.
Then, the multivariate product kernel $K_{\Pi}(\mathbf{u})=\prod_{k=1}^{d}k(\left|u_{k}\right|)$
is also positive definite for all $d\geq1$.
\end{prop}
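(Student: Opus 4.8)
The plan is to reduce the claim to the classical Schur product theorem, which states that the Hadamard (entrywise) product of positive semidefinite matrices is positive semidefinite. First I would extract the univariate content of the hypothesis: specializing the assumption to $d=1$, the kernel $u \mapsto k(|u|)$ is positive definite on $\mathbb{R}$ in the sense of equation~\eqref{eq:positive_definite}, so that for any finite family of reals $t_1, \ldots, t_N$ the symmetric matrix $\left[k(|t_i - t_j|)\right]_{1 \leq i,j \leq N}$ is positive semidefinite.

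Next, fix $N \geq 1$, points $\mathbf{x}_1, \ldots, \mathbf{x}_N \in \mathbb{R}^d$ with $\mathbf{x}_i = (x_{1,i}, \ldots, x_{d,i})$, and verify the defining inequality for $K_{\Pi}$. For each coordinate $m \in \{1, \ldots, d\}$ set
\[
A^{(m)} := \left[ k\!\left(|x_{m,i} - x_{m,j}|\right) \right]_{1 \leq i,j \leq N} \in \mathbb{R}^{N \times N}.
\]
By the univariate observation above, applied to the reals $x_{m,1}, \ldots, x_{m,N}$, each $A^{(m)}$ is positive semidefinite. By the definition of the product kernel,
\[
\left[ K_{\Pi}(\mathbf{x}_i - \mathbf{x}_j) \right]_{1 \leq i,j \leq N} = \left[ \prod_{m=1}^d k\!\left(|x_{m,i} - x_{m,j}|\right) \right]_{1 \leq i,j \leq N} = A^{(1)} \circ A^{(2)} \circ \cdots \circ A^{(d)},
\]
where $\circ$ denotes the Hadamard product.

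Finally, I would apply the Schur product theorem inductively to conclude that the Hadamard product $A^{(1)} \circ \cdots \circ A^{(d)}$ of positive semidefinite matrices is itself positive semidefinite. This is precisely the statement that $\sum_{i,j} y_i y_j K_{\Pi}(\mathbf{x}_i - \mathbf{x}_j) \geq 0$ for all real $y_1, \ldots, y_N$, i.e.\ that $K_{\Pi}$ satisfies~\eqref{eq:positive_definite}. As $N$, the points, and $d$ were arbitrary, positive definiteness holds for all $d \geq 1$.

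There is no serious obstacle here: once the Gram matrix of the product kernel is recognized as the Hadamard product of the $d$ coordinatewise Gram matrices, the result follows immediately from Schur's theorem. The only points worth flagging are that the Hadamard factorization should be spelled out explicitly, and that the full hypothesis (positive definiteness in every dimension) is stronger than what the argument actually uses --- only the $d=1$ case enters --- but phrasing it for all $d$ matches the way isotropic families such as the Mat\'ern kernels are naturally certified via Schoenberg-type criteria.
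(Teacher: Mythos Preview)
Your proof is correct and takes a genuinely different route from the paper's own argument. The paper proceeds via Bochner's theorem: invoking a Schoenberg-type representation (from \citep{langrene2024mixture}), the hypothesis that $k(\|\cdot\|_2)$ is positive definite in \emph{every} dimension yields a nonnegative random variable $R$ with $k(s)=k(0)\,\mathbb{E}[e^{-Rs^2}]$, and the paper then builds an explicit random vector $\bm{\eta}_\Pi$ whose characteristic function equals $K_\Pi/k(0)$, so positive definiteness follows from Bochner. Your approach is more elementary: you reduce immediately to the Schur product theorem, recognizing the Gram matrix of $K_\Pi$ as the Hadamard product of the $d$ coordinatewise Gram matrices, each of which is positive semidefinite by the $d=1$ case of the hypothesis alone. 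Your argument is shorter and, as you correctly observe, uses strictly less of the assumption; what the paper's construction buys in exchange is an explicit spectral distribution for $K_\Pi$, which is of independent interest (for instance for random Fourier feature sampling) but is not needed for the bare positive-definiteness claim.
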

\begin{proof}
Since $K(\mathbf{u})=k(\left\Vert \mathbf{u}\right\Vert _{2})$ is
positive definite for all $d\geq1$, according to Corollary 1a) in
\citep{langrene2024mixture}, there exists a nonnegative random variable
$R$ such that its Laplace transform is equal to $k(\sqrt{.})/k(0)$:
\begin{equation}
k(s)=k(0)\mathbb{E}\left[e^{-Rs^{2}}\right],\ \forall s>0.\label{eq:Rs2}
\end{equation}
Then, define the random vector $\bm{\eta}:=\sqrt{R}\boldsymbol{S}_{2}$
where $\boldsymbol{S}_{2}=\sqrt{2}\boldsymbol{G}$ is a multivariate
Gaussian random vector with independent components with mean zero
and variance $2$. According to Corollary 1c) in \citep{langrene2024mixture}
with $\alpha=1$, the distribution of $\bm{\eta}$ is equal to the
spectral distribution of $K$:
\begin{equation}
K(\mathbf{u})=k(0)\mathbb{E}\left[e^{i\bm{\eta}^{\top}\mathbf{u}}\right]=k(0)\mathbb{E}\left[e^{i\sqrt{R}\boldsymbol{S}_{2}^{\top}\mathbf{u}}\right],\ \forall\mathbf{u}\in\mathbb{R}^{d}.\label{eq:eta_L2}
\end{equation}
Define the product kernel $K_{\Pi}(\mathbf{u})=\prod_{k=1}^{d}k(\left|u_{k}\right|)$,
and define the random vector $\bm{\eta}_{\Pi}:=\sqrt{\boldsymbol{R}}\boldsymbol{S}_{2}$,
where $\boldsymbol{R}=(R_{1},\ldots,R_{d})$ where the $R_{i}$ are
independent and identically distributed copies of $R$, independent
of $\boldsymbol{S}_{2}$. Then
\begin{align*}
k(0)\mathbb{E}\left[e^{i\bm{\eta}_{\Pi}^{\top}\mathbf{u}}\right] & =k(0)\mathbb{E}\left[e^{i(\sqrt{\boldsymbol{R}}\boldsymbol{S}_{2})^{T}\mathbf{u}}\right]=\prod_{k=1}^{d}k(0)\mathbb{E}\left[e^{i\sqrt{R}S_{2}u_{k}}\right]\\
 & =\prod_{k=1}^{d}K(u_{k})=\prod_{k=1}^{d}k(\left|u_{k}\right|)=K_{\Pi}(\mathbf{u})
\end{align*}
where we used the independence between the components of $\bm{\eta}_{\Pi}$,
and equation \eqref{eq:eta_L2} with $d=1$. This shows that the distribution
of $\bm{\eta}_{\Pi}$ is equal to the spectral distribution of $K_{\Pi}$.
According to Bochner's theorem, the existence of the spectral distribution
of $K_{\Pi}$ proves that the kernel $K_{\Pi}(\mathbf{u})=\prod_{k=1}^{d}k(\left|u_{k}\right|)$
is positive definite.
\end{proof}
\begin{defn}
\label{def:symmetric_stable}For any $\alpha\in(0,2]$, let $\boldsymbol{S}_{\alpha}$
be a $d$-dimensional random vector with characteristic function $\phi_{\alpha}$
given by
\begin{equation}
\phi_{\alpha}(\mathbf{u})=\mathbb{E}\left[e^{i\boldsymbol{S}_{\alpha}^{\top}\mathbf{u}}\right]=e^{-\left\Vert \mathbf{u}\right\Vert ^{\alpha}}\ ,\ \mathbf{u}\in\mathbb{R}^{d}\label{eq:symmetric_stable-1}
\end{equation}
where $\left\Vert \mathbf{u}\right\Vert =\sqrt{u_{1}^{2}+\ldots+u_{d}^{2}}$
is the Euclidean norm of $\mathbf{u}=(u_{1},\ldots,u_{d})\in\mathbb{R}^{d}$.
The vector $\boldsymbol{S}_{\alpha}$ is called \textit{symmetric
stable.}
\end{defn}
When $d=1$ and $\alpha=1$, the symmetric stable random variable
$S_{1}$ is such that
\begin{equation}
\mathbb{E}\left[e^{iS_{1}u_{1}}\right]=e^{-\left\Vert u_{1}\right\Vert }=e^{-\left|u_{1}\right|}\label{eq:symmetric_stable_1d}
\end{equation}

for any $u_{1}\in\mathbb{R}$. Now, define the random vector$ $
\begin{equation}
\boldsymbol{S}_{1}^{1}:=\left[S_{1}^{(1)},\ldots,S_{1}^{(d)}\right]^{\top}\label{eq:symmetric_stable_L1_vector}
\end{equation}
where the random variables $S_{1}^{(j)}$, $j=1,2,\ldots,d$, are
independent and identically distributed copies of $S_{1}$. Then,
for any $\mathbf{u}=(u_{1},\ldots,u_{d})\in\mathbb{R}^{d}$,
\begin{equation}
\mathbb{E}\left[e^{i\boldsymbol{S}_{1}^{1\top}\mathbf{u}}\right]=\prod_{j=1}^{d}\mathbb{E}\left[e^{iS_{1}^{(j)}u_{j}}\right]=\prod_{j=1}^{d}e^{-\left|u_{j}\right|}=e^{-\left\Vert \mathbf{u}\right\Vert _{1}}\label{eq:symmetric_stable_L1}
\end{equation}
where $\left\Vert \mathbf{u}\right\Vert _{1}=\left|u_{1}\right|+\ldots+\left|u_{d}\right|$
is the L1-norm of $\mathbf{u}$.
\begin{lem}
\label{lem:RS1}Let $R$ be a real-valued nonnegative random variable,
independent of $\boldsymbol{S}_{1}^{1}$, with Laplace transform $\mathcal{L}$.
Then, the random projection vector defined by
\begin{equation}
\boldsymbol{\eta}=R\boldsymbol{S}_{1}^{1}\label{eq:isotropic_random_projection}
\end{equation}
spans the following shift-invariant kernel $K:\mathbb{R}^{d}\rightarrow\mathbb{R}$:
\begin{equation}
K(\mathbf{u})=K(\mathbf{0})\mathbb{E}\left[e^{i\boldsymbol{\eta}^{\top}\mathbf{u}}\right]=K(\mathbf{0})\mathcal{\mathcal{L}}(\left\Vert \mathbf{u}\right\Vert _{1})\ ,\ \mathbf{u}\in\mathbb{R}^{d}.\label{eq:isotropic_kernel}
\end{equation}
\end{lem}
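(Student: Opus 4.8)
The plan is to compute the characteristic function of $\boldsymbol{\eta}=R\boldsymbol{S}_{1}^{1}$ directly, by conditioning on $R$ and invoking the already-established identity \eqref{eq:symmetric_stable_L1}. First I would write, for an arbitrary $\mathbf{u}\in\mathbb{R}^{d}$,
\[
\mathbb{E}\left[e^{i\boldsymbol{\eta}^{\top}\mathbf{u}}\right]=\mathbb{E}\left[e^{iR\,\boldsymbol{S}_{1}^{1\top}\mathbf{u}}\right],
\]
and then exploit the independence of $R$ and $\boldsymbol{S}_{1}^{1}$ by conditioning on $R$ (equivalently, using the tower property). Since the integrand has modulus one, the interchange of the two expectations is justified without any integrability concern.

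Second, given $R=r$ with $r\geq0$, the inner expectation is exactly the characteristic function of $\boldsymbol{S}_{1}^{1}$ evaluated at the scaled argument $r\mathbf{u}$. By equation~\eqref{eq:symmetric_stable_L1} this equals $e^{-\left\Vert r\mathbf{u}\right\Vert _{1}}$, and since $r\geq0$ the L1-norm is positively homogeneous, so $\left\Vert r\mathbf{u}\right\Vert _{1}=r\left\Vert \mathbf{u}\right\Vert _{1}$ and the inner expectation is $e^{-r\left\Vert \mathbf{u}\right\Vert _{1}}$. This is the single place where the nonnegativity of $R$ is essential.

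Third, I would take the outer expectation over $R$:
\[
\mathbb{E}\left[e^{i\boldsymbol{\eta}^{\top}\mathbf{u}}\right]=\mathbb{E}\left[e^{-R\left\Vert \mathbf{u}\right\Vert _{1}}\right]=\mathcal{L}(\left\Vert \mathbf{u}\right\Vert _{1}),
\]
which is precisely the Laplace transform $\mathcal{L}$ of $R$ evaluated at the nonnegative argument $\left\Vert \mathbf{u}\right\Vert _{1}$. Multiplying through by $K(\mathbf{0})$ then yields the claimed formula~\eqref{eq:isotropic_kernel}, and shift-invariance of $K$ is automatic since the right-hand side depends on $\mathbf{u}$ only through $\left\Vert \mathbf{u}\right\Vert _{1}$.

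There is no genuine obstacle here; the result is a one-line conditioning computation built on \eqref{eq:symmetric_stable_L1}. The only two points deserving a word of care are the positive homogeneity step $\left\Vert r\mathbf{u}\right\Vert _{1}=r\left\Vert \mathbf{u}\right\Vert _{1}$, which is valid precisely because $R\geq0$ so that $\mathcal{L}$ is evaluated only on its natural domain $[0,\infty)$, and the legitimacy of swapping the two expectations, which is immediate from $\left|e^{i\boldsymbol{\eta}^{\top}\mathbf{u}}\right|=1$.
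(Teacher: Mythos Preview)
Your proposal is correct and follows essentially the same approach as the paper: condition on $R$, apply \eqref{eq:symmetric_stable_L1} to the inner expectation, and recognise the outer expectation as the Laplace transform of $R$ at $\left\Vert \mathbf{u}\right\Vert _{1}$. Your version is slightly more explicit about the justifications (positive homogeneity of the $L^{1}$-norm when $R\geq 0$, and boundedness of the integrand), but the argument is the same.
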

\begin{proof}
Recall that the Laplace transform $\mathcal{L}$ of a nonnegative
random variable $R$ is defined by $\mathcal{L}(s)=\mathbb{E}\left[e^{-sR}\right]$
for $s\geq0$. Then, the characteristic function of $\bm{\eta}$ is
given by
\begin{align*}
\mathbb{E}\left[e^{i\boldsymbol{\eta}^{\top}\mathbf{u}}\right] & =\mathbb{E}\left[\mathbb{E}\left[\exp\left(i\boldsymbol{S}_{1}^{1\top}\left(R\mathbf{u}\right)\right)\left|R\right.\right]\right]\\
 & =\mathbb{E}\left[\exp\left(-R\left\Vert \mathbf{u}\right\Vert _{1}\right)\right]\\
 & =\mathcal{\mathcal{L}}(\left\Vert \mathbf{u}\right\Vert _{1})
\end{align*}
which proves equation \eqref{eq:isotropic_kernel}.
\end{proof}
\begin{prop}
\label{prop:positive_definite_L1_kernel}Suppose that the multivariate
isotropic kernel $K(\mathbf{u})=k(\left\Vert \mathbf{u}\right\Vert _{2}^{2})$,
$\mathbf{u}\in\mathbb{R}^{d}$, is positive definite for all $d\geq1$.
Then, the multivariate L1 kernel $K_{1}(\mathbf{u})=k(\left\Vert \mathbf{u}\right\Vert _{1})$
is also positive definite for all $d\geq1$.
\end{prop}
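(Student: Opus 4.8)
The plan is to exhibit an explicit nonnegative spectral measure for $K_{1}$ and then invoke Bochner's theorem, mirroring the strategy used for the product kernel in Proposition~\ref{prop:positive_definite_product_kernel} but routing the argument through the L1 random projection vector $\boldsymbol{S}_{1}^{1}$ and Lemma~\ref{lem:RS1} in place of the Gaussian vector $\boldsymbol{S}_{2}$. The crux is to recognize that the hypothesis forces $k$ itself (not $k$ composed with a square root) to be, up to the multiplicative constant $k(0)$, the Laplace transform of a nonnegative random variable.

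First I would reduce the hypothesis to a Laplace-transform representation of $k$. Introducing the auxiliary profile $\hat{k}(s):=k(s^{2})$, the hypothesis kernel reads $K(\mathbf{u})=k(\|\mathbf{u}\|_{2}^{2})=\hat{k}(\|\mathbf{u}\|_{2})$, which is positive definite for all $d\geq1$. Applying Corollary~1a) of \citep{langrene2024mixture} to $\hat{k}$ yields a nonnegative random variable $R$ with
\[
\hat{k}(s)=\hat{k}(0)\,\mathbb{E}\!\left[e^{-Rs^{2}}\right],\quad s>0.
\]
Since $\hat{k}(s)=k(s^{2})$ and $\hat{k}(0)=k(0)$, the substitution $t=s^{2}$ gives $k(t)=k(0)\,\mathbb{E}[e^{-Rt}]=k(0)\,\mathcal{L}(t)$ for all $t\geq0$, where $\mathcal{L}$ is the Laplace transform of $R$. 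In other words, $k$ is completely monotone up to normalization, which is exactly the input Lemma~\ref{lem:RS1} expects.

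Next I would feed this $R$ into Lemma~\ref{lem:RS1}. Setting $\boldsymbol{\eta}=R\boldsymbol{S}_{1}^{1}$, the lemma produces the kernel
\[
K_{1}(\mathbf{0})\,\mathbb{E}\!\left[e^{i\boldsymbol{\eta}^{\top}\mathbf{u}}\right]=K_{1}(\mathbf{0})\,\mathcal{L}(\|\mathbf{u}\|_{1}),
\]
and since $K_{1}(\mathbf{0})=k(0)$ and $k(0)\,\mathcal{L}(\|\mathbf{u}\|_{1})=k(\|\mathbf{u}\|_{1})=K_{1}(\mathbf{u})$, this shows that the distribution of $\boldsymbol{\eta}$ is the spectral distribution of $K_{1}$ in every dimension $d\geq1$. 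By Bochner's theorem, the existence of this nonnegative spectral measure establishes that $K_{1}(\mathbf{u})=k(\|\mathbf{u}\|_{1})$ is positive definite for all $d\geq1$, completing the argument.

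The genuinely delicate point, rather than any computation, is the bookkeeping around the squared norm: the statement writes $K(\mathbf{u})=k(\|\mathbf{u}\|_{2}^{2})$ with the \emph{square}, so one must compose with $s\mapsto s^{2}$ before invoking Corollary~1a) and then undo this via $t=s^{2}$ to land on a Laplace representation of $k$ itself. I would also stress that the ``for all $d\geq1$'' hypothesis is indispensable: it is precisely Schoenberg-type dimension-walk reasoning (packaged inside Corollary~1a)) that upgrades mere positive definiteness in each fixed dimension to the complete-monotonicity representation, without which Lemma~\ref{lem:RS1} could not be applied.
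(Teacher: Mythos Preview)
Your proposal is correct and follows essentially the same route as the paper: obtain a Laplace-transform representation $k(t)=k(0)\,\mathbb{E}[e^{-Rt}]$, plug $R$ into Lemma~\ref{lem:RS1} with $\boldsymbol{\eta}=R\boldsymbol{S}_{1}^{1}$, and conclude by Bochner. The only cosmetic difference is that the paper invokes Theorem~1a) of \citep{langrene2024mixture} directly on $K(\mathbf{u})=k(\|\mathbf{u}\|_{2}^{2})$ to obtain \eqref{eq:Rs} in one step, whereas you route through Corollary~1a) applied to $\hat{k}(s)=k(s^{2})$ and then substitute $t=s^{2}$; the two paths are equivalent.
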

\begin{proof}
Since $K(\mathbf{u}):=k(\left\Vert \mathbf{u}\right\Vert _{2}^{2})$
is positive definite for all $d\geq1$, according to Theorem 1a) in
\citep{langrene2024mixture}, there exists a nonnegative random variable
$R$ such that its Laplace transform is equal to $k(.)/k(0)$:
\begin{equation}
k(s)=k(0)\mathbb{E}\left[e^{-Rs}\right],\ \forall s>0.\label{eq:Rs}
\end{equation}

Define the L1 kernel $K_{1}(\mathbf{u})=k(\left\Vert \mathbf{u}\right\Vert _{1})$,
and define the random vector $\bm{\eta_{1}}:=R\boldsymbol{S}_{1}^{1}$,
where $\boldsymbol{S}_{1}^{1}$ is defined as in \eqref{eq:symmetric_stable_L1_vector}
and is independent of $R$. Then, according to Lemma~\ref{lem:RS1},
\[
k(0)\mathbb{E}\left[e^{i\boldsymbol{\eta_{1}}^{\top}\mathbf{u}}\right]=k(0)\mathcal{\mathcal{L}}(\left\Vert \mathbf{u}\right\Vert _{1})=k(\left\Vert \mathbf{u}\right\Vert _{1})=K_{1}(\mathbf{u})
\]
As in Proposition~\ref{prop:positive_definite_product_kernel}, the
fact that the spectral distribution of $K_{1}$ exists proves that
$K_{1}$ is positive definite.
\end{proof}

\section{Mat\'ern covariance functions\label{sec:matern}}

This appendix gathers all the univariate Mat\'ern
covariance formulas used in this article, as well as their derivatives. Multivariate Mat\'ern
covariance functions are constructed in Subsection \ref{subsec:multivariate-case}. For a thorough discussion
on the Mat\'ern model, one can refer to \citet{porcu2024matern}.

\subsection{Univariate Mat\'ern covariance functions}

The standard Mat\'ern covariance function $K_{\nu}$ with smoothness
parameter $\nu>0$ is defined by
\begin{equation}
K_{\nu}(u)=k_{\nu}(\left|u\right|):=\frac{(\sqrt{2\nu}\left|u\right|)^{\nu}}{2^{\nu-1}\Gamma(\nu)}\mathcal{K}_{\nu}(\sqrt{2\nu}\left|u\right|)\,,\,\,u\in\mathbb{R}\label{eq:matern_general}
\end{equation}
where $\mathcal{K}_{\nu}$ is the modified Bessel function \citep[10.25]{DLMF},
see \citet[p. 84]{rasmussen2006gaussian}. In practice, we work with
the scaled Mat\'ern covariance function
\begin{equation}
K_{\nu;\varsigma,\ell}(u):=\varsigma^{2}K_{\nu}(u/\ell)\label{eq:scaled_matern}
\end{equation}
with outputscale parameter $\varsigma>0$ and lengthscale parameter
$\ell>0$ (see also \citet{wang2023matern} for alternative parameterizations
of the Mat\'ern covariance). The modified Bessel function $\mathcal{K}_{\nu}$
does not in general admit an analytical formulation, except in the
case when $\nu$ is a positive half-integer. The first few analytical
modified Bessel functions are given below, where $u>0$:
\begin{align*}
\mathcal{K}_{1/2}(u) & =\sqrt{\frac{\pi}{2}}\frac{e^{-u}}{\sqrt{u}}\\
\mathcal{K}_{3/2}(u) & =\sqrt{\frac{\pi}{2}}\frac{e^{-u}(u+1)}{u^{3/2}}\\
\mathcal{K}_{5/2}(u) & =\sqrt{\frac{\pi}{2}}\frac{e^{-u}(u^{2}+3u+3)}{u^{5/2}}\\
\mathcal{K}_{7/2}(u) & =\sqrt{\frac{\pi}{2}}\frac{e^{-u}(u^{3}+6u^{2}+15u+15)}{u^{7/2}}\\
\mathcal{K}_{9/2}(u) & =\sqrt{\frac{\pi}{2}}\frac{e^{-u}(u^{4}+10u^{3}+45u^{2}+105u+105)}{u^{9/2}}
\end{align*}
Applying these analytical expressions in equation \eqref{eq:matern_general}
and using the Gamma function formula $\Gamma\left(\frac{1}{2}+n\right)=\sqrt{\pi}\frac{(2n)!}{4^{n}n!}$
for $n\in\mathbb{N}$ yields the first few analytical Mat\'ern covariance
functions:
\begin{align}
K_{1/2}(u) = k_{1/2}(\left|u\right|) & =e^{-\left|u\right|}\label{eq:matern_1_2}\\
K_{3/2}(u) = k_{3/2}(\left|u\right|) & =\left(1+\sqrt{3}\left|u\right|\right)e^{-\sqrt{3}\left|u\right|}\label{eq:matern_3_2}\\
K_{5/2}(u) = k_{5/2}(\left|u\right|) & =\left(1+\sqrt{5}\left|u\right|+\frac{5}{3}\left|u\right|^{2}\right)e^{-\sqrt{5}\left|u\right|}\label{eq:matern_5_2}\\
K_{7/2}(u) = k_{7/2}(\left|u\right|) & =\left(1+\sqrt{7}\left|u\right|+\frac{14}{5}\left|u\right|^{2}+\frac{7\sqrt{7}}{15}\left|u\right|^{3}\right)e^{-\sqrt{7}\left|u\right|}\label{eq:matern_7_2}\\
K_{9/2}(u) = k_{9/2}(\left|u\right|) & =\left(1+3\left|u\right|+\frac{27}{7}\left|u\right|^{2}+\frac{18}{7}\left|u\right|^{3}+\frac{27}{35}\left|u\right|^{4}\right)e^{-\sqrt{9}\left|u\right|}\label{eq:matern_9_2}
\end{align}
More generally, the Mat\'ern covariance kernel with $\nu=p+1/2$
where $p\in\mathbb{N}$ is given explicitly by
\begin{equation}
K_{p+1/2}(u)=\left(\sum_{i=0}^{p}\frac{p!}{i!(p-i)!}\frac{(2p-i)!}{(2p)!}\left(2\sqrt{2p+1}\left|u\right|\right)^{\!i}\right)\exp\left(-\sqrt{2p+1}\left|u\right|\right)\label{eq:matern_explicit}
\end{equation}

Remark that equation~\eqref{eq:matern_explicit} converges to the squared exponential
covariance function (Gaussian kernel) when $p\rightarrow\infty$:
\begin{equation}
K_{p+1/2}(u)\underset{p\rightarrow\infty}{\longrightarrow}\exp(-u^{2}/2)\label{eq:matern_limit}
\end{equation}
In practice, the most commonly used particular cases are the Mat\'ern-1/2
 \eqref{eq:matern_1_2}, Mat\'ern-3/2 \eqref{eq:matern_3_2}
and Mat\'ern-5/2 \eqref{eq:matern_5_2} kernels, even
though higher-order Mat\'ern kernels are occasionally used. Remark
that the Mat\'ern-1/2 kernel (exponential covariance function) is
the covariance function of the Ornstein-Uhlenbeck process.

\subsection{Derivative of Mat\'ern covariance functions}

For application purposes, we are often interested in estimating the
two scaling parameters $\varsigma>0$ and $\ell>0$ on a given dataset.
This can be done by gradient descent, which can be implemented either
analytically or by automatic differentiation. For convenience, we
provide the analytical gradient formulas of $K_{\nu;\varsigma,\ell}(u)$
with respect to $\varsigma$ and $\ell$. From equation \eqref{eq:scaled_matern},
the gradient with respect to the outputscale $\varsigma$ is straightforward:
\begin{equation}
\frac{\partial K_{\nu;\varsigma,\ell}(u)}{\partial\sigma}=2\varsigma K_{\nu}(u/\ell)\label{eq:dkernel_dvarsigma}
\end{equation}
for every $\nu>0$, $\varsigma>0$, $\ell>0$. We now turn to the
gradient with respect to the lengthscale $\ell>0$. It is given by
\begin{equation}
\frac{\partial K_{\nu;\varsigma,\ell}(u)}{\partial\ell}=-\frac{\varsigma^{2}}{\ell^{2}}\left|u\right|k_{\nu}^{'}(\left|u\right|/\ell)\label{eq:dkernel_dl}
\end{equation}
which requires to compute the first-order derivative $k_{\nu}^{'}$
of the standard Mat\'ern covariance function. For the first few half-integer
values of $\nu>0$, it is given by:
\begin{align}
k_{1/2}^{'}(u) & =-e^{-u}\label{eq:dmatern12_dl}\\
k_{3/2}^{'}(u) & =-\left(3u\right)e^{-\sqrt{3}u}\label{eq:dmatern32_dl}\\
k_{5/2}^{'}(u) & =-\left(\frac{5}{3}u+\frac{5\sqrt{5}}{3}u^{2}\right)e^{-\sqrt{5}u}\label{eq:dmatern52_dl}\\
k_{7/2}^{'}(u) & =-\left(\frac{7}{5}u+\frac{7\sqrt{7}}{5}u^{2}+\frac{49}{15}u^{3}\right)e^{-\sqrt{7}u}\label{eq:dmatern72_dl}\\
k_{9/2}^{'}(u) & =-\left(\frac{9}{7}u+\frac{27}{7}u^{2}+\frac{162}{35}u^{3}+\frac{81}{35}u^{4}\right)e^{-\sqrt{9}u}\label{eq:dmatern92_dl}
\end{align}
and the general formula for $\nu=p+1/2$ where $p\in\mathbb{N}$ is
given by
\begin{equation}
k_{p+1/2}^{'}(u)=-\left(\sum_{i=1}^{p}\frac{i\sqrt{2p+1}}{(2p-i)}\frac{p!}{i!(p-i)!}\frac{(2p-i)!}{(2p)!}\!\left(2\sqrt{2p+1}\right)^{\!i}u^{i}\!\right)e^{-\sqrt{2p+1}u}\label{eq:dmatern_explicit_dl}
\end{equation}
from which we can deduce the derivative of $K_{\nu;\varsigma,\ell}$
with respect to the lengthscale $\ell$ using equation \eqref{eq:dkernel_dl}. These kernel derivative formulas have been used to obtain the kernel gradient decomposition formulas in Subsection~\ref{subsec:multivariate_matern_decomposition}.

\end{document}